\title{GraphRQI: Classifying Driver Behaviors Using Graph Spectrums}
\author{Rohan Chandra$^1$, Uttaran Bhattacharya$^1$, Trisha Mittal$^1$, Xiaoyu Li$^2$, Aniket Bera$^1$ and Dinesh Manocha$^1$\\%
$^1$University of Maryland $^2$Cornell University\\
\small{Code and Supplementary Material at \url{https://gamma.umd.edu/graphrqi}}
}
\newcommand{\bigO}[1]{\mathcal{O}(#1)}
\newcommand{\algoname}{GraphRQI}
\newcommand{\sota}{state-of-the-art}
\newcommand{\brr}[1]{\left( #1 \right)}
\newcommand{\bss}[1]{\left[ #1 \right]}
\newcommand{\mc}[1]{\mathcal{#1}}
\newcommand{\sg}{\mathcal{L}}
\newcommand{\li}{\sg}
\newcommand{\vts}[1]{\lvert #1 \rvert}
\newcommand{\Vts}[1]{\lVert #1 \rVert}
\newcommand{\bb}[1]{\mathbb{#1}}
\newcommand\Tstrut{\rule{0pt}{2.6ex}}         % = `top' strut
\newcommand\Bstrut{\rule[-1.3ex]{0pt}{0pt}}   % = `bottom' strut
\newcommand{\cost}{\bigO{ \vts{\li^{\scriptscriptstyle -1}_t}k }}
\newcommand{\costk}{\bigO{ \vts{\li^{\scriptscriptstyle -1}_t} }}
\newcommand\footnoteref[1]{\protected@xdef\@thefnmark{\ref{#1}}\@footnotemark}
\newcommand{\size}{\bigO{d}}
\newcommand{\shorteq}{%
  \settowidth{\@tempdima}{-}% Width of hyphen
  \resizebox{\@tempdima}{\height}{=}%
}
\theoremstyle{plain}
\newtheorem{theorem}{Theorem}[section]
\newtheorem{lemma}[theorem]{Lemma}
\newtheorem*{thm}{Theorem}
\newtheorem*{lem}{Lemma}
\begin{document}

\maketitle
\thispagestyle{empty}
\pagestyle{empty}

\begin{abstract}
\label{sec: abstract}
We present a novel algorithm (GraphRQI) to identify driver behaviors from road-agent trajectories. Our approach assumes that the road-agents exhibit a range of driving traits, such as aggressive or conservative driving. Moreover, these traits affect the trajectories of nearby road-agents as well as the interactions between road-agents. We represent these inter-agent interactions using unweighted and undirected traffic graphs. Our algorithm classifies the driver behavior using a supervised learning algorithm by reducing the computation to the spectral analysis of the traffic graph. Moreover, we present a novel eigenvalue algorithm to compute the spectrum efficiently. We provide theoretical guarantees for the running time complexity of our eigenvalue algorithm and show that it is faster than previous methods by 2 times. We evaluate the classification accuracy of our approach on traffic videos and autonomous driving datasets corresponding to urban traffic. In practice, GraphRQI achieves an accuracy improvement of up to $25\%$ over prior driver behavior classification algorithms. We also use our classification algorithm to predict the future trajectories of road-agents. 

\end{abstract}

\section{Introduction}
\label{sec: intro}
Autonomous driving is an active area of research~\cite{ad1,ad2} and includes many issues related to perception, navigation, and control~\cite{ad2}. While some of the earlier work was focused on autonomous driving in mostly empty roads or highways, there is considerable recent work on developing technologies in urban scenarios. These urban scenarios consist of a variety of road-agents corresponding to vehicles, buses, trucks, bicycles, and pedestrians. In order to perform collision-free navigation, it is important to track the location of these road-agents~\cite{chandra2019roadtrack,chandra2019densepeds} and predict their trajectory in the near future~\cite{pendleton2017perception,schwarting2018planning,social-lstm,social-gan,chandra2019traphic,chandra2019robusttp,chandra2019forecasting}. 
% While autonomous vehicles can work relatively well in sparse conditions where road-agents follow strict traffic rules, autonomous navigation is challenging in dense and heterogeneous traffic where agents may not follow standard traffic rules such as observing speed limits, lane-driving, or right-of-way. 
These tasks can be improved with a better understanding and prediction of driver behavior and performing behavior-aware planning~\cite{ernest,ernest_nav,schwarting2018planning,paden2016survey}. For example, drivers can navigate dangerous traffic situations by identifying the aggressive drivers and avoiding them or keeping a distance. At the same time, there is a clear distinction between such driver behaviors and maneuver-based road-agent behaviors~\cite{pseudobehavior1,honda,pseudobehavior2,pseudobehavior3}. According to studies in traffic psychology~\cite{humanfactor1,humanfactor2,humanfactor3}, driver behavior corresponds to labels for various driving traits, such as aggressive and conservative driving~\cite{humanfactor1,humanfactor2,humanfactor3}. These studies further show that traffic flow and road-agent trajectories are significantly affected by driver behaviors. For instance, an aggressive road-agent overtaking a vehicle may cause the vehicle to slow down, or an overly cautious driver may drive slowly and cause congestion.
% It makes sense, then, that a successful driver behavior prediction system should capture the among nearby road-agents. 

Our goal is to classify the drivers according to their behaviors in a given traffic video. The two most widely used behavior labels used to classify drivers are aggressive and conservative~\cite{rohanref3,rohanref4,rohanref5}. Recent studies have also explored more granular behaviors, such as reckless, impatient, careful and timid~\cite{rohanref6-reckless, rohanref7-timid}. For instance, impatient drivers show different traits (e.g., over-speeding, overtaking) than reckless drivers (e.g., driving in the opposite direction to the traffic)~\cite{rohanref6-reckless}, but both can be regarded as aggressive. Similarly, timid drivers may drive slower than the speed limit~\cite{rohanref7-timid}, while careful drivers obey the traffic rules. However, both of them are marked as conservative drivers. In this paper, we consider a total of six different classes of driver behavior --- impatient, reckless, threatening, careful, cautious, and timid~\cite{ernest}. For the rest of the paper, we collectively refer to the first three as aggressive driving, while the last three are categorized as part of conservative driving.

\begin{figure}[t]
    \centering
    \includegraphics[scale=.28, width=\columnwidth]{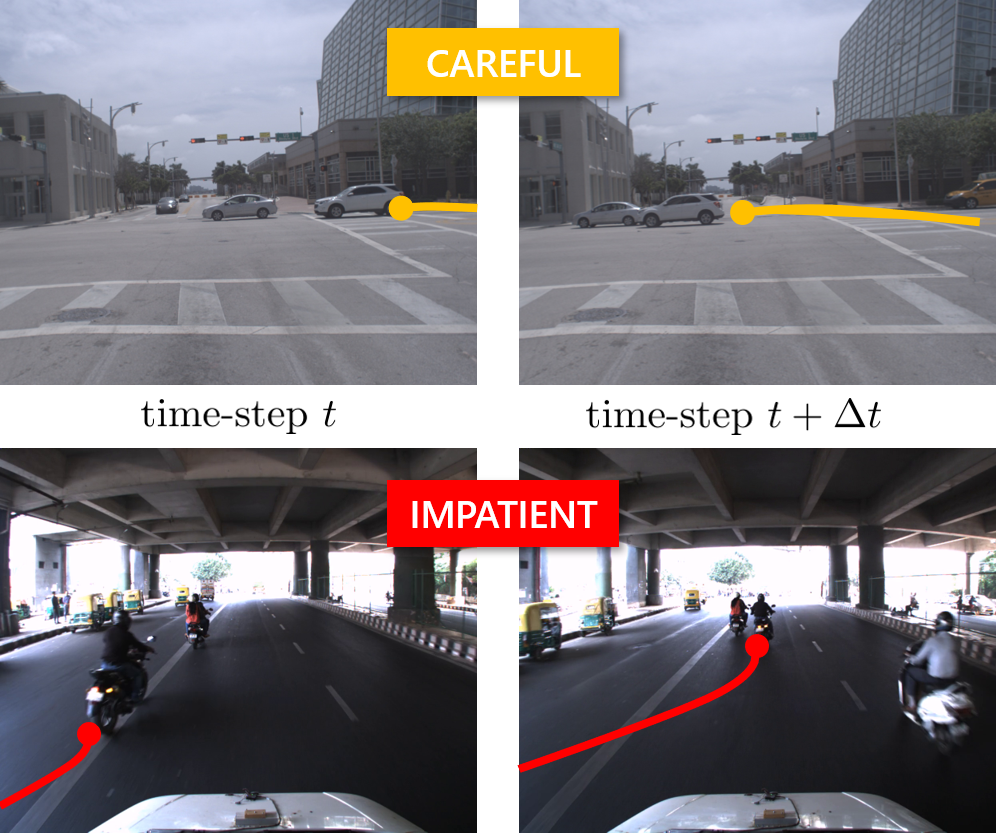}
    \caption{GraphRQI predicts the behaviors of road-agents in dense and heterogeneous traffic from one of the following classes--\textit{impatient, reckless, threatening, careful, cautious, timid}. Our approach is up to 25\% more accurate than prior behavior prediction methods.}
    \label{fig:cover0}
    \vspace{-10pt}
\end{figure}

\begin{figure*}[h]
    \centering
    % \resizebox{.7}{!}{
    \includegraphics[width=\textwidth]{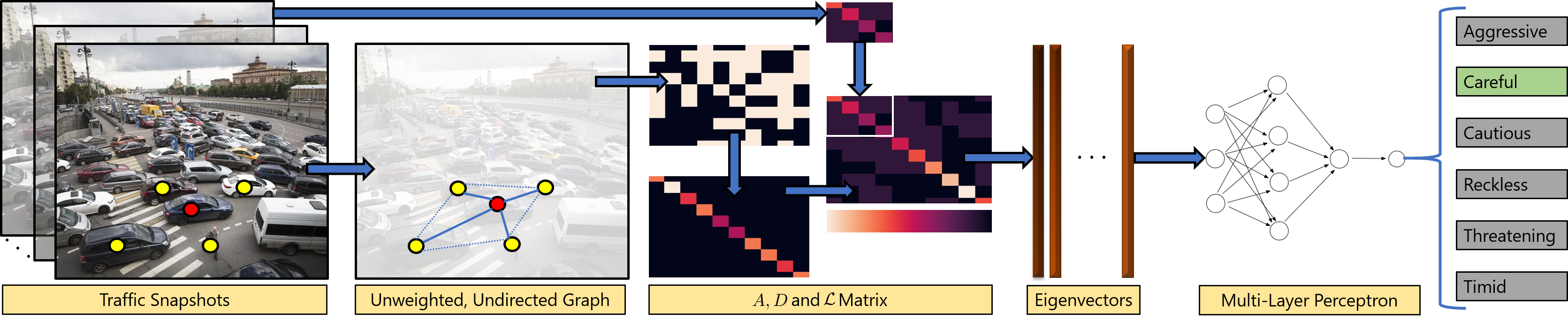}
    % }
    \caption{\textbf{Overview of GraphRQI: }At any instance of traffic, we use the current trajectories to form an unweighted and undirected graphical representation. We can then compute the adjacency matrix, the degree matrix, and the Laplacian matrix of the graph. Note that the principal leading submatrix of the Laplacian can be obtained from the Laplacian of the traffic from previous time-steps. We use our eigenvalue algorithm to compute the spectrum of the Laplacian, which is then used for training via a multi-layer perceptron to predict the driver behavior.}
    \label{fig:cover}
    \vspace{-10pt}
\end{figure*}
% Graphs encode the interactions between objects through edges formed between the vertices.
Since the behavior of individual drivers directly influences the trajectories of all other drivers in their neighborhood, we can represent the drivers as the nodes of a sparse, undirected, and unweighted graph, and the interactions between the drivers through edges between the corresponding nodes. In our formulation, we analyze the spectral properties of such a graph for behavior classification. However, there are a few challenges in calculating the spectrum of such graphs. First, traffic flow is dynamic, therefore the number of edges in the graph changes at each time-step. Second, the number of nodes in the graph tends to increase monotonically with each time-step, since retaining information on past interactions is useful for predicting current and future behaviors of other drivers~\cite{social-lstm}. This is a problem for current \sota~methods that perform graph spectral analysis ~\cite{cg}, as the computational complexity grows at least as $\bigO{n^{1.5}}$, where $n$ is the number of nodes in the graph. In order to classify the behaviors, we need to design efficient methods to perform spectral analysis of such traffic graphs.

\noindent{\bf Main Results:} We present a novel algorithm to classify driver behaviors from road-agent trajectories. Our approach is general, and we don't make any assumption about traffic density, heterogeneity, or the underlying scenarios. The trajectories are extracted from traffic videos or other sensor data, and the resulting behaviors are classified among the three aggressive or three conservative behaviors described above. In order to efficiently classify the behavior, we present two novel algorithms:
\begin{enumerate}
    \item A classification algorithm, \algoname, that takes as input road-agent trajectories, converts them into a dynamic-graph-based representation of the traffic, and predicts driver behaviors as one of six labels from the spectrum of the dynamic graph. 
    
    \item An eigenvalue algorithm that updates the spectrum of the dynamic traffic graph with $n$ current nodes in $\bigO{n}$ time, up from $\bigO{n^{1.5}}$ required by prior methods. 
    
    % GraphRQI is an iterative algorithm that extends the classical RQI algorithm and converges cubically to an eigenvector-eigenvalue pair. Our algorithm does not require any pre-conditioning initialization, sparse-matrix approximations, or low-rank matrix approximations.
    % Our algorithm converges cubically (distance from groundtruth = $\bigO{\epsilon^3}, \epsilon \ll 1$) at every iteration, enabling realtime driver behavior prediction.
    %\item We demonstrate an application where we show that driver behavior can be used for predicting the trajectories of road-agents.

\end{enumerate}
Finally, we demonstrate an application where knowledge of the behavior of a road-agent allows nearby road-agents to generate efficient and safer paths based on different behavioral characteristics. We have compared our algorithm with prior classification schemes and observe the following benefits:
\begin{itemize}

    \item \textit{Accuracy improvement:} We report improvement of up to $20\%$ and $25\%$ in the weighted classification accuracy on the TRAF dataset~\cite{chandra2019traphic} and the Argoverse dataset~\cite{argo}, respectively.
    
    \item \textit{Speed improvement:} With up to $100$ drivers, the runtime of our eigenvalue algorithm is $10$ milliseconds which is a speed of up to $2.348$ seconds over prior works for calculating the spectrums of dynamic traffic graph.
    % All the experiments were run on an 8 Core Intel Xeon(R) W2123 CPU clocked at 3.60GHz with 32 GB RAM.
    
    % \item Robustness to sensor noise: Our algorithm is more robust to sensor noise than prior methods and can predict driver behavior from videos captured from the front camera as well as top-down cameras. 
\end{itemize}
\vspace*{-5pt}
\section{Related Work}
\label{sec: related}
\subsection{Driver Behavior and Intent Prediction}

There is a large body of research on driver behavior and intent prediction. Furthermore, many studies have been performed that provides insights into factors that contribute to different driver behavior.
%Due to readily available biographical information of human beings such as age, gender, occupation, and so on, many studies have been performed that provides insights into factors that contribute to different driver behavior.
Feng et al.~\cite{ernestref2} proposed five driver characteristics (age, gender, personality via blood test, and education level) and four environmental factors (weather, traffic situation, quality of road infrastructure, and other cars’ behavior), and mapped them to 3 levels of aggressiveness (driving safely, verbally abusing other drivers, and taking action against other drivers). Rong et al.~\cite{rohanref3} presented a similar study but instead used different features such as blood pressure, hearing, and driving experience to conclude that aggressive drivers tailgate and weave in and out of traffic. Dahlen et al.~\cite{rohanref5} studied the relationships between driver personality and aggressive driving using the five-factor-model~\cite{big5}. Aljaafreh et al.~\cite{ernestref8} categorized driving behaviors into four classes: Below normal, Normal, Aggressive, and Very aggressive, in accordance with accelerometer data. Social Psychology studies~\cite{ernestref9,ernestref10} have examined the aggressiveness according to the background of the driver, including age, gender, violation records, power of cars, occupation, etc. Mouloua et al.~\cite{ernestref11} designed a questionnaire on subjects’ previous aggressive driving behavior and concluded that these drivers also repeated those behaviors under a simulated environment. However, the driver features used by these methods cannot be computed easily for autonomous driving in new and unknown environments, which mainly rely on visual sensors.
%using current sensors like cameras or lidars. 

Several methods have analyzed driver behavior using visual and other information. Murphey et al.~\cite{ernestref12} conducted an analysis on the aggressiveness of drivers and observed that longitudinal (changing lanes) jerk is more related to aggressiveness than progressive (along the lane) jerk (i.e., rate of change in acceleration). Mohamad et al.~\cite{ernestref13} detected abnormal driving styles using speed, acceleration, and steering wheel movement, which indicated the direction of vehicles. Qi et al.~\cite{ernestref14} studied driving styles with respect to speed and acceleration. Shi et al.~\cite{ernestref15} pointed out that deceleration is not very indicative of the aggressiveness of drivers, but measurements of throttle opening, which are associated with acceleration, were more helpful in identifying aggressive drivers. Wang et al.~\cite{ernestref16} classified drivers into two categories, aggressive and normal, using speed and throttle opening captured by a simulator. Sadigh et al.~\cite{ernestref17} proposed a data-driven model based on Convex Markov Chains to predict whether a driver is paying attention while driving. Some recent approaches also utilize RL (reinforcement learning) and imitation learning techniques to teach networks to recognize various driver intentions~\cite{qi2018intent,codevilla2018end}, while others have applied RNN and LSTM-based networks to perform data-driven driver intent prediction~\cite{zyner2018recurrent,zyner2017long}. Apart from behavior prediction, some methods have been proposed for behavior modeling~\cite{yeh2008composite,helbing1995social,bera2016glmp,bera2016realtime, guy2012statistical}. In our case, we exploit the spectrum of dynamic traffic graphs to predict driver behaviors, which is complementary to these approaches and can be combined with such methods.
% are analytic studies that are useful in determining factors that contribute to different behaviors, they do not predict driver behavior on the roads.
\vspace*{-7pt}
\subsection{Supervised Learning Using Graphs}

% Supervised learning is a popular paradigm of machine learning. 
% Algorithms using Convolutional Neural Networks (CNNs) have shown to be the state-of-the-art in classification and prediction tasks on one-, two-, and three-dimensional grid-like data such as images, video, speech, and text~\cite{cnn1,cnn2}.
% However, some applications use data that do not have an underlying grid structure, for instance, social networks. 
Bruna et al.~\cite{bruna2013spectral} introduce supervised learning using convolutions on graphs via graph convolutional networks (GCNs). Kipf et al.~\cite{kipf2016semi} extend GCNs to self-supervised learning. However, these methods are restricted to static graphs. In order to model the temporal nature of traffic using graphs, Yu et al.~\cite{stgcn} used a Spatio-temporal GCN (STGCN). The authors use recurrent neural networks to compute the temporal features of the dynamic traffic network.
% This is useful for discovering patterns in the data along the time axis. However, for predicting driver behavior, we are interested in extracting patterns in the graph structure itself, using spectral theory on dynamic graphs. Computing the spectrum for dynamic graphs is still an active area of research. 
TIMERS~\cite{zhang2018timers} incrementally computes the spectrum using incremental Singular Value Decomposition (SVD)~\cite{incsvd}. The method implemented a restart on the SVD when the error increases above a threshold. Li et al.~\cite{li2017attributed} proposed an algorithm for computing the spectrum for node classification tasks that support addition and deletions of edges. 

\section{Background}
\label{sec: background}

\begin{table}
  \caption{Notations used in the paper. }
  \label{tab: notation}
  \centering
  \resizebox{.9\columnwidth}{!}{%
  \begin{tabular}{|c|l|}
  \hline
    Symbol  & Description \\
    \hline
    $n$ & Number of vehicles\\
    $[m]$ & \{1,2,\ldots,m\} \\
    $\vts{\cdot}$ & Number of non-zero elements in a vector or matrix.\\
    $\mc{I}_d$& Set of indices corresponding to non-zero entries in a vector $d$.\\
    $A_\mc{S}$& Set of columns of $A$, indexed by elements of $\mc{S}$.\\
    $\li_t$ & The $t^{\textrm{th}}$ Laplacian matrix in a sequence of sub-degree matrices. \\
    $d = \textrm{dim}\li_t$ & The dimension number of rows (or columns)) of the laplacian matrix .\\
    $\delta$ & A binary vector of length $n$ such that $\vts{\delta} \ll n$ \\
    $U$ & Eigenvector matrix\\
    $\Lambda$ & Diagonal matrix with eigenvalues on the diagonal.\\
    \hline
  \end{tabular}
%   }
  }
  \vspace{-10pt}
\end{table}

\begin{table*}[h]
\caption{Analytical Comparison of GraphRQI with classical and state-of-the-art eigenvector algorithms. In practice, our observed runtime is $10$ milliseconds which is a speed of up to $2.348$ seconds over prior works for calculating the spectrums of dynamic traffic graph.}
\centering
\resizebox{\textwidth}{!}{%
\begin{tabular}{ccccccc} 
\toprule
%  & \multicolumn{6}{c}{Eigenvector Algorithms} \\
% \cmidrule{2-6}
% \cmidrule{9-11}
Metric & \Bstrut SVD & IncrementalSVD~\cite{incsvd} & RestartSVD~\cite{zhang2018timers}  & TruncatedSVD & CG~\cite{cg} &  \textbf{GraphRQI} \\
\midrule
% \multirow{4}{*}{TRAF }
        Time \Tstrut & $\bigO{d^3}$ & $\bigO{\vts{\li_t}r}$ & $\mathcal{O} \Big ( \vts{f(\li_t)} + \vts{g(\li_t)}k +\vts{g(\li_t)_{\overline{\emptyset}}}k^2 \Big )$ & $\bigO{dk^2}$ &$\bigO{d\sqrt{\kappa}}$& $\cost$\\
%      &&&&&& \\
%               &&&&&& \\
% % \cline{2-8}
%               &&&&&& \\
% \midrule
% \multirow{4}{*}{ARGO }
        Space \Tstrut & $\bigO{d^2}$ & $\bigO{2 \ dr}$ &-& $\bigO{dk}$ & $\bigO{d}$ & $\size$\\
%       &&&&&& \\
%               &&&&&& \\
% % \cli_ine{2-8}
%              &&&&&& \\
\bottomrule
\end{tabular}
}
\label{tab:bigo}
 \vspace{-10pt}
\end{table*}
% \vspace*{-12pt}

Our overall objective is to classify all drivers in a traffic video into one of six behavior classes --- impatient, reckless, threatening, careful, cautious, and timid. We assume that the trajectories of all the road-agents in the video are provided to us as the input. Given this input, we first construct a traffic graph at each time-step, with the number of road-agents during that time-step being the number of nodes and undirected, unweighted edges connecting the neighboring road-agents during that time-step. Next, we show that the spectrum of the Laplacian matrix of the graphs contains pertinent information to classify the nodes (drivers) into corresponding behavior classes (Section IV).

In this section, we give an overview of a graph spectrum and a brief overview of graph topology. Notations used frequently throughout the paper are presented in Table~\ref{tab: notation}.
%Notations used less frequently are defined insitu.
% Our problem statement is as follows: At any time instance $i$, given only the trajectory of all road-agents, our goal is to predict the behavior of the driver. We present the main algorithm, \algoname~in Section~\ref{sec: algorithm}. The major step in \algoname~is to form a graph from the trajectories of the road-agents and to compute the spectrum of the graph using spectral graph theory. We give a brief background about spectral graph theory below. 

\begin{figure}[t]
% \begin{tabular}{ll}
\includegraphics[width = .95\columnwidth]{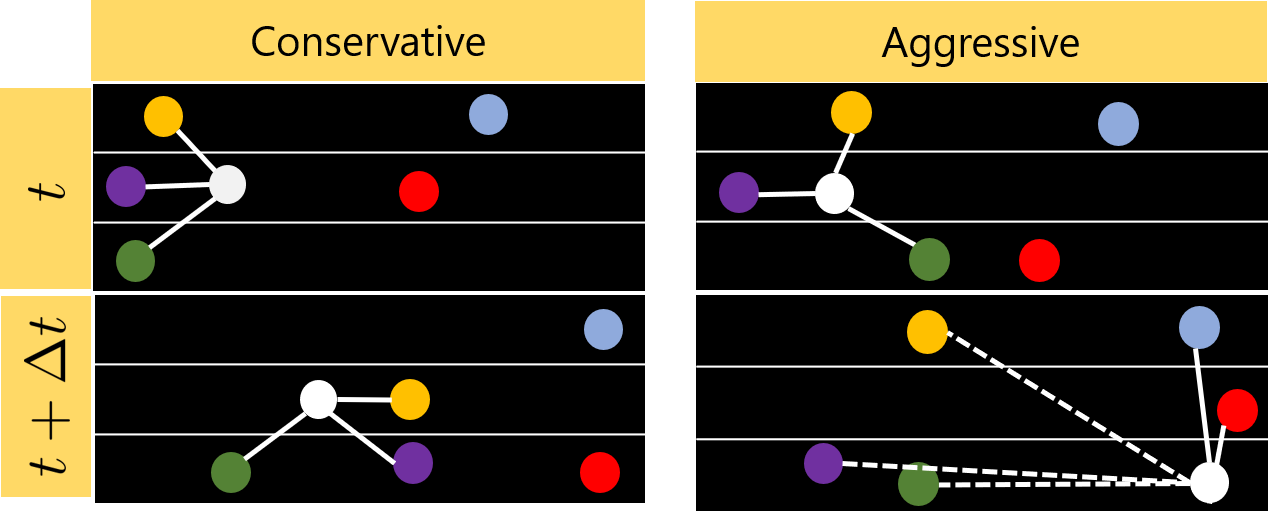}
% &
% \includegraphics[width = \columnwidth]{figs/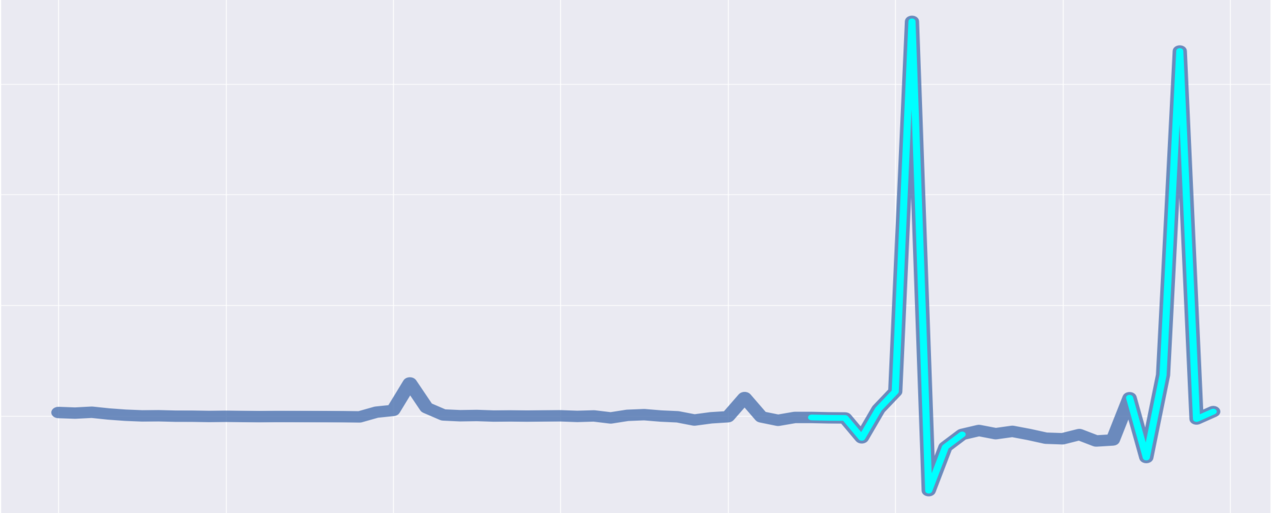}
% \end{tabular}
 \caption{(\textit{left}) Conservative vehicles tend to adhere to a uniform speed. Consequently, they form a uniform neighborhood that results in smooth topology in the traffic graph with short edges. (\textit{right}) Aggressive drivers tend to display maneuvers such as tailgating, over-speeding, overtaking, and weaving. These maneuvers cause the road-agent to form sharper subgraphs in the traffic graph with long edges. }
% (right) The eigenvector for road-agents at any given time-instant is plotted with road-agents on the x axis and the value of each entry of the eigenvector on the y axis. The $i^{\textrm{th}}$ entry corresponds to the $i^{\textrm{th}}$ road-agent. High entries correspond to sharp graph topologies (aggressive (impatient, threatening, reckless) road-agents) while low entries correspond to smooth graph topologies (conservative (timid, cautious, careful)).}
\label{fig: main}
\vspace{-15pt}
\end{figure}
% Spectral graph theory is a broad field of computer science that deals with graph-based data. It has numerous applications such as graph visualizations, spectral clustering, and graph colorings. For a detailed review of spectral graph theory, we refer the reader to XXXX. The properties and structure of graphs are studied with the help of matrices. For example, three matrices are typically constructed: the adjacency matrix, $A$, the diagonal matrix, $D$, and the Laplacian, $L = D - A$. These are defined as follows.

\subsection{Graph Spectrum}
\label{subsec: graph_spectrum}

Given a graph $\mc{G}$, with a set of vertices $\mc{V} = \{ v_1, v_2, \dots, v_n \}$ and a set of undirected, weighted edges $e_{ij} = e_{ji} = (v_i, v_j)$, $\mc{E}$, we define the entries of  its adjacency matrix $A \in \mathbb{R}^{n \times n}$ as $A(i, j) = A(j, i) = e^{-d(v_i,v_j)}$ if $e_{ij} \in \mc{E}$ and $A(i, j) = 0$, otherwise.  The function $f(v_i,v_j) = e^{-d(v_i,v_j)}$~\cite{belkin2003laplacian} denotes the interactions between any two road-agents, $v_i$ and $v_j$. The corresponding degree matrix $D \in \mathbb{R}^{n \times n}$ of the graph is a diagonal matrix with entries $D(i, i) = \sum_{j=1}^n A(i, j)$. The Laplacian matrix $\mc{L}  = D  - A$ of the graph has the entries,
%. Let $\mc{G}= (\mc{V}, \mc{E})$, with $\lvert \mc{V} \rvert = n$, then the Laplacian matrix associated with $\mc{G}$ is an $n\times n$ matrix $\mc{L} = D-A$, where $D \in \mathbb{R}^{n\times n}$ is the degree matrix - a diagonal matrix where $D(i, i)$ is the degree of the $i^{th}$ node in $\mc{G}$, and $A \in \mathbb{R}^{n\times n}$ is the adjacency matrix, with $A(i, j) = 1$ if and only if $(v_i, v_j) \in \mc{E}$. We have then,
\vspace*{-5pt}
\begin{equation*}
\mc{L}(i,j) = \mc{L}(j,i) = 
     \begin{cases}
       D(i,i)  &\text{if $i=j$},\\
      -e^{-d(v_i,v_j)} &\text{if $d(v_i,v_j) < \mu$}, \\
      0 &\text{otherwise.}
        
     \end{cases}
\end{equation*}

\vspace*{-5pt}

\noindent The matrix $U := \{ u_i | i = 1 \dots n\}$ of eigenvectors of $\mc{L}$ is called the \emph{spectrum} of $\mc{L}$. It can be computed based on eigenvalue of $\mc{L}$.

% Intuitively, the Laplacian is a linear operator that, along with the spectrum (the set of vectors, $u_i \in U_S, i \in S = \{ 1,2, \ldots, n \}$), can be used to describe the structure, or topology (explained below), of a graph. To see this, consider the following equation that computes the sum of squares of edge lengths of the $i^{\textrm{th}}$ node in neighborhood $\mc{N}_i = \{ j \ | \ e_{ij} \in \mc{E} \}$:

% \begin{equation}
%      u_i^T \mc{L} u_i = \sum_{j\in \mathcal{N}_i} \brr{e_{ij}}^2
%      \label{eq: spectrum}
%     %  \vspace{-10pt}
% \end{equation}

% \noindent The lower the value of $u_i^T L u_i$, the smoother, or more ``stable", the structure of the graph. To illustrate this, consider a substance (heat, electricity) flowing between vertices through the edges of a graph, $\mc{G}$. Then, if $e_{ij}$ denotes the amount of heat (or current) that flows from vertex $v_i$ to vertex $v_j$, $\mc{L}$ measures the difference between the amount of heat at vertex $i$ and the average of the amount of heat contained in its neighboring vertices. Therefore, a lower value of $u_i^T L u_i$ indicates that the amount of heat at vertex $i$ is similar to the amount of heat at its neighbors which is characteristic of a stable system. We provide a derivation to Equation~\ref{eq: spectrum} in the supplementary material.
\subsection{Graph Topology}
\label{subsec: topology}

The topology of a vertex $v_i$ in a graph refers to the arrangement of the adjacent neighborhood $\mc{N}_i$ of the vertex. For example, if $(v_i, v_j) \in \mc{E}$ $\forall v_j \in \mc{N}_i$ and $(v_j, v_k) \notin \mc{E}$ $\forall \{v_j, v_k\} \in \mc{N}_i$, then we have a star topology~\cite{star}. The spectrum $U$ of the Laplacian matrix $\mc{L}$ can be used to compute the topology of a vertex. Let $w_i \in \mathbb{R}^{n}$ denote the $i^{\textrm{th}}$ column of the matrix $\mc{L} U$. Then the $j^{\textrm{th}}$ entry of $w_i$ is given by,
%, where $U \in \mathbb{R}^{n\times k}$ is the eigenvector matrix, that is, $w_i = \li u_i$
\vspace*{-6pt}
\begin{equation}
w_i(j) = D(j,j)u_i(j) - \sum u_i(k) = \sum \brr{u_i(j)-u_i(k)}.
\label{eq: w}
\end{equation}
\vspace*{-1pt}
\noindent Thus, the $j^{\textrm{th}}$ entry of $w_i$ equals the sum of edge values of all edges with $v_j$ as the common node. Since the arrangement of vertices depend on the edge lengths between them, this determines the topology of the graph.

\section{Behavior Classification Using \algoname}
\label{sec: algorithm}
In this section, we first show the construction of traffic graphs. Next, we present our eigenvalue algorithm. Finally, we present an analysis of our eigenvalue algorithm.

\subsection{Computation of Traffic Graphs}
\label{subsec: computation of graphs}
Traffic-graphs and their topologies form the basis for our driver behavior classification approach. Behavioral traffic psychology studies show that aggressive drivers tend to display maneuvers such as tailgating, over-speeding, overtaking, and weaving~\cite{rohanref3}. From a topological perspective, all four maneuvers are similar (Figure~\ref{fig: main}, right) with a sharper graph structure with larger edge lengths. On the other hand, conservative vehicles tend to adhere to a uniform speed, forming a smoother topology with smaller edge lengths (Figure~\ref{fig: main}, left). Thus, topologies for aggressive behavior are different from the topologies for conservative behavior. We describe the formation of traffic-graphs next.
% Behavioral traffic psychology studies show that aggressive drivers tend to display maneuvers such as tailgating, over-speeding, overtaking, and weaving~\cite{rohanref3}. From a topological perspective, all four maneuvers are similar (Figure~\ref{fig: main}, right) with a sharper graph structure with larger edge values. On the other hand, conservative vehicles tend to adhere to a uniform speed. Conservative behavior has a smoother topology with smaller edge value (Figure~\ref{fig: main}, left). Such behavior-driven maneuvers and their topologies can be represented effectively through graphs (See Section~\ref{sec: background} for construction of Adjacency and Laplacian matrices). More specifically, the spectrum of the graph can be used to represent the topologies of the graph (refer to discussion in Section~\ref{subsec: topology}), or in our application, they can be used to compute the relative positions of each road-agent in traffic at any time instant. For example, in Figure~\ref{fig: main}, the topology corresponding to aggressive drivers are generally associated with eigenvectors of large eigenvalues, while the topology corresponding to conservative drivers are generally associated with eigenvectors of smaller eigenvalues.

% \subsection{Eigenvalue Algorithm of GraphRQI}
% \vspace*{-5pt}

\begin{algorithm}
% \floatname{algorithm}{GraphRQI: }
% \resizebox{0.5}{!}{
\caption{Our eigenvalue algorithm computes the $k$ eigenvectors, $\{u_1, u_2, \ldots, u_k\}$ of a Laplacian, $\li_i$. }
\label{algo}
\begin{algorithmic}[1]
\Require $\li_0,\Lambda_0$
\For {$j$ in range$(k)$}{}
\State{Initialize $x_\textrm{old}$}
\State{$x_\textrm{new} = x_\textrm{old} / \Vts{x_\textrm{old}}$}
\While {$\Vts{x_\textrm{new}-x_\textrm{old}} \leq \epsilon$}
\State{$x_\textrm{old} = x_\textrm{new} / \Vts{x_\textrm{new}}$}
\State {$x_\textrm{new} \gets \textsc{update}( \sigma \sigma^\top,\li_{t-1}, \mu_j)$}
% \State{$x_\textrm{old} = x_\textrm{new}$}
\EndWhile
\State{$u_j = x_\textrm{new} / \Vts{x_\textrm{new}}$}
\EndFor
\Return $U = [u_1, u_2, \ldots, u_k]$

\end{algorithmic}
% }
% \vspace{-10pt}
\end{algorithm}
% \vspace*{-5pt}
% We present our algorithm, GraphRQI, in this section. The objective of GraphRQI is as follows: At a time-step $i$, given a Laplacian, $\li_t$, the goal is to compute the top $k$ eigenvectors, $u_i | i \in [k]$, that collectively comprise the spectrum, $U$. 

At the $t$-th time-step, we obtain the $k$-nearest neighbors~\cite{knn} of each road-agent using their current trajectories. In our experiments, we heuristically choose $k=4$. We initialize an adjacency matrix $A_t$ with all zeros, and assign $A(i,j) = 1$ if road-agents $v_i, v_j$ are neighbors. We then form the Laplacian matrix (ala Section~\ref{subsec: graph_spectrum}) at the $t^{\textrm{th}}$ time-step, $\li_t = D_t - A_t$. As road-agents observe  new neighbors, we update the Laplacian matrix using the following update:

\[
\li_t =
\left[
\begin{array}{c|c}
\li_{t-1} \Bstrut & 0 \Bstrut\\
\hline
0 \Tstrut & 1
\end{array}
\right] + \delta\delta^\top,
\]

\noindent where $\li_{t-1}$ is the leading principal sub-matrix of $\li_t$ of order $k$ and $\delta$ is defined in Table~\ref{tab: notation}. For our experiments, we reset the Laplacian matrix after $T=100$ to avoid buffer overflow.

% However, obtaining the eigenvector matrix, $U$, for a Laplacian $\li_t \in \mathbb{R}^{d \times d}$ is a challenging task for many realtime applications since practical numerical solvers require $\bigO{n^{2.5}}$ operations for dense matrices. If the matrix has some special structure such as symmetry or sparsity, we can reduce the computational cost. 
% Currently the fastest known algorithm (cite CG) for computing $U$ for sparse, symmetrical matrices is $\bigO{n^{3/2}}$ with a storage cost of $\bigO{n}$.
% In the next section, we give an algorithm to generate $U$ using only $\cost$ operations, where $\vts{\li^{\scriptscriptstyle -1}_i} \ll n^2$, and with storage cost of $\size$.

\subsection{Eigenvalue Algorithm}

Our eigenvalue algorithm is built upon the classical RQI algorithm~\cite{rqi} that computes an eigenvector $u$ that corresponds to an approximation of a given eigenvalue $\mu$ of a given matrix. However, the dominant step of the RQI consists of matrix inversion that generally requires $\bigO{d^3}$ operations, and so the applicability of RQI to a sequence of dynamic ( or time-varying) matrices largely depends on two factors --- computational complexity of matrix inversion, and the length of the sequence. For a sequence of dynamic Laplacian matrices,  $\{ \li_1, \li_2, \ldots , \li_T \}$, the main advantage of the GraphRQI approach is to be able to compute the eigenvector matrix, $U$, very efficiently by combining the following optimizations:

\begin{itemize}
    \item Recursively exploit sub-$k$ matrix information.
    \item Exploit the sparsity and symmetry of Laplacian matrices to compute inverse Laplacian matrices efficiently. 
\end{itemize}

We give the pseudo-code of our eigenvalue algorithm in Algorithm~\ref{algo}. At each time-step $t$, we compute $k$ eigenvectors of $\li_t$. For each eigenvector, we perform an iterative process. We begin by initializing a random vector. Next, we iteratively perform the following update rule until it converges to an eigenvector. For the $j^{\textrm{th}}$ eigenvector, the update rule is given as:
\vspace*{-5pt}
\begin{equation}
\resizebox{\columnwidth}{!}{
    $x_\textrm{new} \gets \textsc{SM}\brr{\delta\delta^\top,
\left[
\begin{array}{c|c}
\brr{1-\mu_j}\Big ( \textsc{SM}(\sigma\sigma^\top, \li_{t-1}) \Big )  \Bstrut & 0 \Bstrut \\
\hline
0 \Tstrut & 1-\mu_j
\end{array}
\right]
}x_\textrm{old}$
}
\label{eq: label}
\end{equation}
\vspace*{-1pt}
\noindent where \textsc{sm} refers to the Sherman-Morrison formula~\cite{sm} that computes the inverse of a sum of a matrix and an outer product, $x x^{\top}$, where $x\in \mathbb{R}^{d\times 1}$ is a sparse vector, $\mu_j$ is the approximate eigenvalue corresponding to which we compute an eigenvector, $u_j$. $\li_{t-1}, U_{t-1}$, and $\Lambda_{t-1}$ are the Laplacian, spectrum, and corresponding diagonal eigenvalue matrix of the previous time-step. $\delta$ is a sparse $\mathbb{R}^{d}$ vector where, if the $k^\textrm{th}$ entry of $\delta$ is denoted by $\delta(k)$, then the $k^\textrm{th}$ road-agent observes a new neighbor if $\delta(k)=1$.

\begin{table*}[h]
\caption{Ablation study on TRAF and ARGO datasets. We perform a running time analysis of several eigenvalue algorithms. All experiments were performed on an 8 Core Intel Xeon(R) W2123 CPU clocked at 3.60GHz with 32 GB RAM to compute the eigenvectors for a $d\times d$ matrix, where $d$ is the number of road-agents. We also compare the accuracy of different supervised learning machine learning models and report the weighted classification accuracy.}
% We perform two experiments -- first, we replace GraphRQI with different spectrum generating algorithms. The second experiment is to}
\centering
\resizebox{\textwidth}{!}{%
\begin{tabular}{ccccccc|cccc} 
\toprule
 & \multicolumn{6}{c}{Eigenvalue Algorithms (Runtime)} & \multicolumn{4}{c}{GraphRQI + ML model (Accuracy)}\\
\cmidrule{1-7}
\cmidrule{8-11}
Dataset \Tstrut  & \Bstrut SVD & IncrementalSVD~\cite{incsvd} & RestartSVD~\cite{zhang2018timers}  & TruncatedSVD & CG  & \textbf{GraphRQI}  & LogReg & SVM & LSTM &  \textbf{GraphRQI} \\
\midrule
% \multirow{4}{*}{TRAF }
        TRAF \Tstrut &34.2ms & 67ms &1,644ms&37ms& 2,365ms &\textbf{16.9ms}&69.1\%&74.2\%& 71.4\% & \textbf{78.3\%}  \\
%      &&&&&& \\
%               &&&&&& \\
% % \cline{2-8}
%               &&&&&& \\
% \midrule
% \multirow{4}{*}{ARGO }
        ARGO \Tstrut &16.9ms &45ms&1,091ms&35ms& 2,328ms &\textbf{10ms}& 69.9\%& 75.0\%& 70.8\%&\textbf{ 89.9\%} \\
%       &&&&&& \\
%               &&&&&& \\
% % \cline{2-8}
%              &&&&&& \\
\bottomrule
\end{tabular}
}
\label{tab:ablation}
\vspace{-10pt}
\end{table*}

\subsection{Graph Spectrum Analysis}

We compute the spectrum of the graph that describes the topologies of traffic at various time-step to classify aggressive and conservative behaviors.

In Table~\ref{tab:bigo}, we compare the time and space complexities of several \sota~eigenvalue algorithms. For each of these methods, the input is a Laplacian matrix. We now state the theoretical guarantees for the running time and storage cost of our eigenvalue algorithm in the following theorems. All proofs are provided in the supplementary material\footnote{\url{https://gamma.umd.edu/graphrqi}}.
\begin{theorem}
Given a sequence, $\{ \sg_1,\sg_2, \ldots, \sg_T\} $, our eigenvalue algorithm for $\sg_t$ converges cubically with a running time complexity of $\cost$ for each Laplacian at time-step $t$, where $k \ll \textnormal{dim}(\li_t)$, with a storage cost of $\size$.
\label{thm: 1}
\end{theorem}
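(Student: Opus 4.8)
The plan is to decompose the statement into three independent claims and establish each in turn: cubic convergence, the per--time-step running time $\cost$, and the storage cost $\size$. The convergence claim I would reduce entirely to classical Rayleigh Quotient Iteration (RQI) theory, while the two complexity claims I would obtain by carefully accounting for the cost of a single application of the Sherman--Morrison (\textsc{sm}) operator in Equation~\eqref{eq: label} and multiplying by the number of iterations the algorithm performs.

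For cubic convergence, the key observation is that each $\li_t$ is a real symmetric matrix, since by construction $\li_t(i,j)=\li_t(j,i)$. Classical RQI applied to a Hermitian (hence real symmetric) matrix is known to converge cubically for almost every starting vector, so it suffices to argue that Algorithm~\ref{algo} produces exactly the RQI iterates. This is where the \textsc{sm} identity does the work: the Sherman--Morrison formula computes $\brr{A+xx^\top}^{-1}$ \emph{exactly}, so replacing the explicit inverse in the RQI update by the nested \textsc{sm} evaluation of Equation~\eqref{eq: label} changes only \emph{how} the update vector is computed, not its value. Hence the produced sequence coincides with the standard RQI sequence and inherits its cubic rate.

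For the running time, I would first note that cubic convergence forces the inner \textsc{while} loop to reach tolerance $\epsilon$ in $\bigO{\log\log(1/\epsilon)}=\bigO{1}$ iterations, so the loop over the $k$ requested eigenvectors performs $\bigO{k}$ updates in total. It then remains to show that a single evaluation of Equation~\eqref{eq: label} costs $\bigO{\vts{\li_t^{-1}}}$. Here I would exploit two facts: the block structure of the update, which lets the inner $\textsc{SM}(\sigma\sigma^\top,\li_{t-1})$ reuse the already-available inverse of the leading sub-matrix rather than recomputing it, and the sparsity of the rank-one vectors $\delta$ and $\sigma$, so that forming $\li_t^{-1}\delta$ and the associated outer product touches only the $\vts{\li_t^{-1}}$ nonzero entries of the inverse. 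A matrix--vector product against a matrix with $\vts{\li_t^{-1}}$ nonzeros is $\bigO{\vts{\li_t^{-1}}}$, and multiplying by the $\bigO{k}$ total updates yields $\cost$.

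For the storage cost and the main obstacle, the point is that we never materialize the dense inverse: we retain only the sparse Laplacian together with its factored \textsc{sm} representation and the $\bigO{d}$-length working vectors $x_\textrm{old},x_\textrm{new},\delta,\sigma$, giving $\size$. The hard part will be making the per-iteration cost bound rigorous rather than heuristic. Two subtleties must be handled with care. First, cubic RQI requires the shift $\mu_j$ to track the Rayleigh quotient and therefore change every iteration, so the inverse really is recomputed each pass; I must verify that this recomputation stays a rank-one correction that the \textsc{sm} step absorbs in $\bigO{\vts{\li_t^{-1}}}$ time, rather than a full $\bigO{d^3}$ refactorization. Second, the nested \textsc{sm} evaluations are valid only when the Sherman--Morrison denominators $1+v^\top A^{-1}u$ are nonzero, which holds precisely because the shifted matrices remain nonsingular away from exact eigenvalues; I would fold this into the standing genericity assumption already implicit in RQI. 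Bounding the fill-in of $\li_t^{-1}$ so that $\vts{\li_t^{-1}}$ genuinely controls the work, rather than degrading to the dense $d^2$, is the crux of the argument and the step I expect to require the most care.
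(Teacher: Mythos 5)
Your skeleton (RQI update realized through Sherman--Morrison, per-update cost measured by $\vts{\li_t^{-1}}$, cubic rate inherited from classical RQI, $\bigO{\log\log(1/\epsilon)}$ inner iterations) matches the paper's outline, but the proposal is missing the one idea the paper's proof actually rests on, and the substitute you sketch for it would fail. The crux you correctly flag --- how to apply $(\li_{t-1}-\mu_j I)^{-1}$ cheaply for an \emph{arbitrary and changing} shift $\mu_j$ --- is resolved in the paper not by ``reusing an already-available inverse of the leading sub-matrix,'' but by recursion on the \emph{spectrum}: the algorithm carries $U_{t-1}$ and $\Lambda_{t-1}$ forward from the previous time-step (this is why Algorithm~\ref{algo} takes $\li_0,\Lambda_0$ as input), and then for \emph{any} shift $\mu$ one has $(\li_{t-1}-\mu I)^{-1}=U_{t-1}(\Lambda_{t-1}-\mu I)^{-1}U_{t-1}^\top$, i.e.\ the shifted inverse is obtained by shifting a diagonal matrix. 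The paper then writes $\li_t=\li_t'+rr^\top$ with $\li_t'$ block-diagonal in $\li_{t-1}^r$, and $\li_{t-1}^r=\li_{t-1}+\Sigma$ where $\Sigma=I_1+\cdots+I_r$ has rank $r$ bounded by a constant (each vehicle observes at most one new neighbor per step), so only a constant number of SM applications --- each costing $\bigO{\vts{\li_t^{-1}}}$ by the paper's Lemma 3 --- are needed on top of the spectrally-obtained inverse. Your proposal never introduces this factorization, and without it there is no mechanism delivering the shifted inverse in $\bigO{\vts{\li_t^{-1}}}$ time.

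The fallback you propose for the changing shift is the step that breaks: you hope the recomputation forced by updating $\mu_j$ ``stays a rank-one correction that the SM step absorbs,'' but two shifted matrices differ by $(\mu-\mu')I$, which has rank $d$, not rank one; no bounded number of Sherman--Morrison updates can bridge them, so this verification cannot succeed. (The paper never faces this problem precisely because, once the previous spectrum is in hand, every shift is a diagonal perturbation of $\Lambda_{t-1}$.) A secondary divergence: your storage argument keeps ``the sparse Laplacian plus a factored SM representation,'' whereas the paper's $\size$ claim comes from the Laplacian growing by one row and column per time-step; note that your representation is left unspecified exactly where it matters, namely what object encodes $(\li_{t-1}-\mu I)^{-1}$ across iterations. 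Your observation that the fill-in of $\li_t^{-1}$ must be controlled for $\vts{\li_t^{-1}}$ to be meaningful is legitimate --- the paper itself leaves this implicit by stating all bounds in terms of $\vts{\li_t^{-1}}$ --- but it does not rescue the missing recursion, which is the load-bearing step of the proof.
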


% \begin{proof}

% \end{proof}

Informally, Theorem~\ref{thm: 1} states that for a Laplacian matrix $\li_t$, the compuatation for each eigenvector requires $\costk$ operations, which is fewer than quadratic runtime algorithms that require $\bigO{d^2}$ operations, since $\costk \ll d^2$. For a random intial iterate, $x$, our eigenvalue algorithm converges cubically, that is, at each iteration, $\vts{x_\textrm{new}-x_\textrm{old}} = \bigO{\epsilon^3}$. 

Note that some eigenvalue algorithms make several assumptions in terms of running time complexity analysis. For example, the conjugate gradient method~\cite{cg} requires $\bigO{d\sqrt{\kappa}}$ operations. The success of conjugate gradient assumes the matrix is either well-conditioned or pre-conditioned by a preconditioning matrix, for which the running time becomes $\bigO{d^{1.5}}$. IncrementalSVD~\cite{incsvd} requires the matrix to be low-rank. In the general case, IncrementalSVD requires $\bigO{\vts{\li_t}d}$ which becomes $\bigO{\vts{\li_t}r}$ for low rank matrices with rank $r$. Our eigenvalue algorithm makes no such assumptions. We now state a lemma, related to our Laplacian matrix, that is used to prove Theorem~\ref{thm: 1}.

\begin{lemma}
The complexity of computing the inverse of a Laplacian at time-step $t$, $\li^{\scriptscriptstyle -1}_t$, grows as $\bigO{\vts{\li^{\scriptscriptstyle -1}_t}}$.
\end{lemma}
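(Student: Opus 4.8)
The plan is to exploit the incremental structure of the Laplacian together with the Sherman--Morrison identity, and then to charge every arithmetic operation against a distinct non-zero entry of the output $\li_t^{-1}$. Recall from the construction in Section~\ref{subsec: computation of graphs} that $\li_t = M + \delta\delta^\top$, where $M$ is the block-diagonal matrix whose leading principal block is $\li_{t-1}$ and whose trailing diagonal entry is $1$, and where $\delta$ is sparse with $\vts{\delta}\ll d$. Since $M$ is block diagonal, $M^{-1}$ is obtained for free from the previously computed $\li_{t-1}^{-1}$ sitting in its leading block, so the entire cost of the step is the cost of the rank-one correction.

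First I would apply Sherman--Morrison. Because $\delta\delta^\top$ is a symmetric rank-one perturbation and $M^{-1}$ is symmetric, the formula collapses to
\begin{equation*}
\li_t^{-1} = M^{-1} - \frac{v\,v^\top}{1 + \delta^\top v}, \qquad v := M^{-1}\delta,
\end{equation*}
so the only quantities to be formed are the vector $v$, the scalar $1+\delta^\top v$, and the scaled outer product $v v^\top$ that is subtracted. I would then bound each of these in turn by $\vts{\li_t^{-1}}$. Writing $v$ as a sum of the $\vts{\delta}$ columns of $M^{-1}$ indexed by the support of $\delta$, the work is bounded by the total number of non-zeros in this subset of columns, hence by $\vts{M^{-1}}$, which is generically at most $\vts{\li_t^{-1}}$. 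Letting $p=\vts{v}$, the outer product $v v^\top$ has exactly $p^2$ non-zeros, and forming and subtracting it costs $\bigO{p^2}$; the key observation is that the support of $v v^\top$ is precisely the fill-in pattern introduced into $\li_t^{-1}$, so $p^2\le\vts{\li_t^{-1}}$. The dot product $\delta^\top v$ is dominated by $\bigO{\vts{v}}$. Summing the three contributions yields the claimed $\bigO{\vts{\li_t^{-1}}}$.

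The hard part will be making the ``no accidental cancellation'' argument rigorous, i.e.\ justifying that the non-zero pattern of $v v^\top$ is genuinely contained in that of $\li_t^{-1}$ rather than being partially annihilated by entries of $M^{-1}$ of opposite sign, and likewise that the non-zeros of $M^{-1}$ survive the subtraction. I would handle this either by a genericity (measure-zero) argument on the edge weights $e^{-d(v_i,v_j)}$, or, more simply, by noting that the implementation never exploits cancellation, so the operation count is governed by the stored sparsity pattern of the output, which by construction carries at most $\vts{\li_t^{-1}}$ non-zeros.

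A secondary subtlety is that the pure graph Laplacian is singular (the all-ones vector lies in its kernel), so $\li_t^{-1}$ is not literally defined. I would resolve this by observing that the matrix actually inverted inside Algorithm~\ref{algo} is the shifted operator carrying the $(1-\mu_j)$ factors of Eq.~\eqref{eq: label}, which is nonsingular for the Rayleigh-quotient shifts $\mu_j$; the same Sherman--Morrison bookkeeping applies verbatim to that operator, leaving the per-step complexity at $\bigO{\vts{\li_t^{-1}}}$ as claimed.
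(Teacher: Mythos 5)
Your decomposition ($\li_t = M + \delta\delta^\top$ with $M$ block-diagonal, a single Sherman--Morrison application, and costs charged to the non-zeros of the output) is in the same spirit as the paper, but it has a gap that your own final paragraph exposes. As you correctly note, the matrix that actually has to be inverted inside the algorithm is the shifted operator $\li_t - \mu_j I$, since the unshifted Laplacian is singular. But once the shift is in play, your claim that ``$M^{-1}$ is obtained for free from the previously computed $\li_{t-1}^{-1}$'' no longer closes the induction: what you need is $\brr{\li_{t-1} - \mu_j I}^{-1}$ for the \emph{current} shift $\mu_j$, which changes at every RQI iteration and for every eigenvector, and which was never computed at time-step $t-1$. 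Recomputing it from scratch costs $\bigO{d^{2.373}}$ and destroys the claimed bound. The paper's proof supplies exactly the missing mechanism: what is carried over from the previous time-step is not an inverse but the spectral decomposition $U_{t-1}, \Lambda_{t-1}$, so that $\brr{\li_{t-1} - \mu I}^{-1} = U_{t-1}\brr{\Lambda_{t-1} - \mu I}^{-1}U_{t-1}^\top$ is available for \emph{any} shift $\mu$ at the cost of inverting a diagonal matrix. Your argument cannot do without this step, and adding it is not a cosmetic fix --- it is the content of the lemma.

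A second, smaller discrepancy: the paper's lemma is really about the diagonal (degree) correction. When existing road-agents gain neighbors, the previous Laplacian must first be corrected to $\mathcal{L}^r_{t-1} = \li_{t-1} + I_1 + \cdots + I_r$ before any appended row/column is considered; the paper inverts $\brr{\mathcal{L}^r_{t-1} - \mu I}$ by applying Sherman--Morrison $r$ times to the spectrally-obtained inverse above, using the modeling assumption that each vehicle observes at most one new neighbor per time-step so that $r$ is a constant. Your single-$\delta\delta^\top$ version follows the main-text update formula literally and skips this correction. On the positive side, you flag two issues the paper silently ignores --- possible cancellation (so that the support of $vv^\top$ need not sit inside that of $\li_t^{-1}$, for which your genericity argument is the right fix, while the ``stored sparsity pattern'' variant is circular, since cancellation would make $\vts{\li_t^{-1}}$ smaller than the pattern you charge against) and the singularity of the Laplacian --- but raising these does not repair the main gap above.
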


% \begin{proof}
% ffefe
% \end{proof}

The current best-case running-time complexity for inverting a $d \times d)$ matrix is $\bigO{d^{2.373}}$ using a variation of the Coppersmith–Winograd algorithm~\cite{inversion}. We show that we can achieve an even lower runtime complexity for inverting our matrix in $\costk$, where $\costk \ll d^2$.
% \begin{theorem}
% Our eigenvalue algorithm requires a storage cost of $\size$ at time-step $i$.

% \end{theorem}

% % \begin{proof}
% % ffefe
% % \end{proof}
% Our problem statement is as follows: At any time instance, $i$, given only the trajectory of all road-agents, our goal is to predict the behavior of the driver. We begin by using the current trajectory of the road-agents to form the Laplacian matrix, $\li_t$. The next step is to use our eigenvalue algorithm to compute the spectrum, $U$. 

\subsection{Behavior Classification}

The graphs are constructed from the road-agent trajectories as described in Section~\ref{subsec: computation of graphs}. We then pass the spectrum of the traffic graphs as inputs to train a Multi-Layer Perceptron(MLP)~\cite{mlp} for behavior classification. The MLP is parametrized by a weight vector, $w$. Our goal is to learn the optimal parameters that minimize the squared error loss,
\vspace*{-5pt}
\begin{equation}
    f(w) =  \frac{1}{2} \sum_{k=1}^d \brr{w^\top y_k - z_k}^2
\end{equation}
\vspace*{-1pt}
\noindent where $d$ is the number of vehicles, $y_k$ is the $k^{\textrm{th}}$ row of the eigenvector matrix, $U$ and corresponds to the feature vector of the $k^{\textrm{th}}$ vehicle. The corresponding label of the $k^{\textrm{th}}$ vehicle is denoted as $z_k$. In our experiments, $d$ is around $100$ for each traffic video.

% \vspace{-5pt}
\section{Implementation and Performance}
% \vspace{-5pt}
We evaluate the performance of our algorithm on two open-source datasets, TRAF and Argoverse, described in Section~\ref{subsec:datasets}. In Section~\ref{subsec:metrics}, we list other classification algorithms that are used to compare~\algoname~using standard classification evaluation metrics. We report our results and analysis in Section~\ref{subsec:analysis}. We perform exhaustive ablation experiments to motivate the advantages of of~\algoname~in Section~\ref{subsec:ablation}. Finally, in Section~\ref{subsec:prediction}~we show how driver behavior knowledge can be useful and applied to road-agent trajectory prediction.
% \begin{table*}[h]
% \caption{Behavior prediction accuracy comparison on TRAF and ARGO datasets.}
% \centering
% % \resizebox{\textwidth}{!}{%
% \begin{tabular}{clccccccc} 
% \toprule
% Dataset \Tstrut & Method \Bstrut & Impatient & Reckless & Threatening & Cautious & Careful & Timid & Overall \\
% \hline
% \multirow{3}{*}{TRAF }
% & DANE~\cite{dane} \Tstrut         &&&&&&&\\
% & Cheung et al.~\cite{ernest}      &&&&&&& \\
% % & Linear Regression                &&&&&&&& \\
% % \cline{2-8}
% & \textbf{GraphRQI}                &\textbf{92.4\%}&\textbf{96.2\%}&\textbf{88.6\%}&\textbf{99.3\%}&\textbf{75.4\%}&\textbf{99.3\%}&\textbf{77.4\%} \\
% % \midrule
% \toprule
%  \Tstrut &  \Bstrut & \hbox{\transparent{0.25}Impatient} & \hbox{\transparent{0.25}Reckless} & Threatening & \hbox{\transparent{0.25}Cautious} & Careful & \hbox{\transparent{0.25}Timid} & Overall \\
% \hline
% \multirow{3}{*}{ARGO }
% & DANE~\cite{dane} \Tstrut         &&&&&&& \\
% & \hbox{\transparent{0.25}Cheung et al.~\cite{ernest}}      &&&&&&& \\
% % & Linear Regression                &&&&&&&& \\
% % \cline{2-8}
% & \textbf{GraphRQI}               &&&&&&& \\
% \bottomrule
% \end{tabular}
% % }
% \label{tab:accuracy}
% % \vspace{-10pt}
% \end{table*}

\vspace*{-5pt}

\begin{table}[h]
\caption{We report weighted classification accuracy on TRAF and ARGO datasets. Our GraphRQI considerably improves the accuracy.}
\centering
\resizebox{\columnwidth}{!}{%
\begin{tabular}{clc} 
\toprule
Dataset \Tstrut & Method \Bstrut &   Weighted Accuracy \\
\hline
\multirow{3}{*}{TRAF }
& DANE~\cite{dane} \Tstrut         & 68.1\%\\
& Cheung et al.~\cite{ernest}      & 63\% \\
% & Linear Regression                &&&&&&&& \\
% \cline{2-8}
& \textbf{GraphRQI}               &  \textbf{78.3\%} \\
\midrule
% \toprule
%  \Tstrut &  \Bstrut & Overall \\
% \hline
\multirow{3}{*}{ARGO }
& DANE~\cite{dane} \Tstrut      & 65.5\%\\
& Cheung et al.~\cite{ernest}     & 62.5\%\\
% & Linear Regression                &&&&&&&& \\
% \cline{2-8}
& \textbf{GraphRQI}              & \textbf{ 89.9\%} \\
\bottomrule
\end{tabular}
}
\label{tab:accuracy}
\vspace{-15pt}
\end{table}

% \vspace{-2pt}
\subsection{Datasets}
\label{subsec:datasets}
\paragraph{TRAF} The TRAF dataset~\cite{chandra2019traphic} consists of road-agents in dense traffic with heterogeneous road-agent captured using front and top-down viewpoints in daylight as well as night setting. These videos correspond to the highway and urban traffic in dense situations from cities in China and India.  
\paragraph{Argoverse} Argoverse~\cite{argo} contains 3D tracking annotations, 300k extracted road-agent trajectories, and semantic maps collected in urban cities, Pittsburgh and Miami.

For both datasets, we obtained behavior classification labels through crowd-sourcing and used as the ground truth. All annotators were asked to choose from the six labels. We obtain labels for about 600 agents from TRAF and 250 agents from Argoverse. More details on the labeling can be found in the supplementary material.

%We note that the current size of the dataset prohibits the applicability of deep neural networks (on account of over-fitting). We acknowledge this limitation and hope that our effort motivates the generation of more behavior-driven datasets. WELL TECHNICALLY ARGO AND OTHER PUBLIC DATASETS ARE PRETTY BIG. I DON'T KNOW IF THIS IS THE RIGHT THING TO MENTION. ONLY BECAUSE WE HAVE LABELED A FEW BEHAVIOR VIDEOS IT'S UNFAIR TO SAY CURRENT SIZE OF DATASETS LIMITS DEEP LEARNING

\subsection{Evaluation Metrics and Methods}
\label{subsec:metrics}
Our data contains a long-tail distribution of behavior labels (refer to the supplementary material for exact distribution). We report a weighted classification accuracy which defined as $A_w = \sum_{k=1}^6 f_k A_k $. Here, for the $k^{\textrm{th}}$ behavior label, $f_k$ denotes the fraction of the road-agents with that label in the ground truth and $A_k$ denotes the accuracy for that label class. We also report the running time of our eigenvalue computation measured in seconds.

We compare our approach with Dynamic Attributed Network Embedding (DANE)~\cite{dane} and Cheung et al.~\cite{ernest}. DANE considers two sets of input matrices --- the data matrix and the attribute matrix, where the data consists of the IDs of objects and attributes are features describing the objects. We use DANE for behavior classification by using the IDs of road-agents for the data matrix and their trajectories for the attribute matrix. DANE uses an eigenvalue algorithm to compute, and update, the spectrum of the Laplacians for both the data and attribute matrix. Cheung et al.~\cite{ernest} do not use an eigenvalue algorithm; instead, they use linear lasso regression based on a set of trajectory features.

%DANE performs classification on nodes based on node attributes (UNCLEAR?). In our experiments, we consider the coordinates of road-agents as attributes (WHAT ATTRIBUTES?). Cheung et al. performed driver behavior classification using linear regression on trajectory features.

%%%%%%%%%%%%%%%%%%%%%%%%%%%%%
% \begin{figure}[H]
% \begin{tabular}{ll}
% \includegraphics[width = .45\columnwidth]{figs/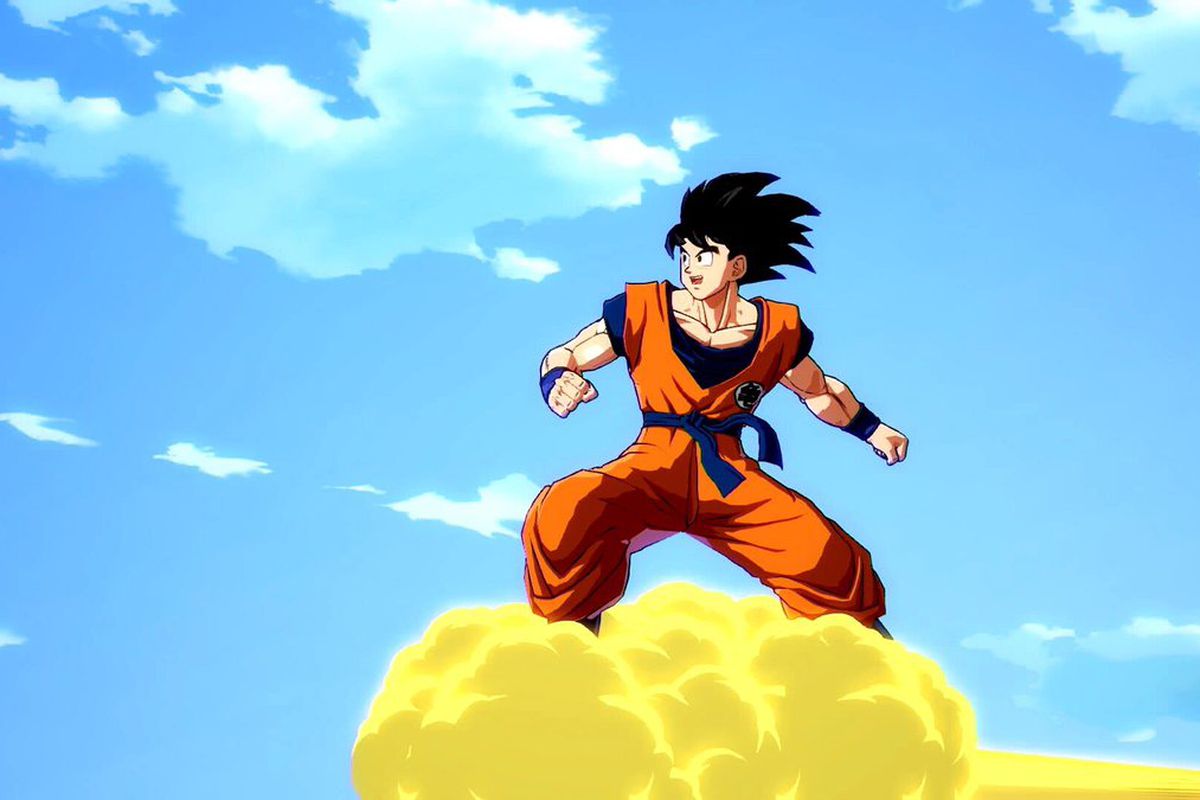}
% &
% \includegraphics[width = .45\columnwidth]{figs/cat.jpg}
% \end{tabular}
% \caption{Confusion Matrix}
% \label{fig:confusionmatrix}
% \vspace{-10pt}
% \end{figure}
%%%%%%%%%%%%%%%%%%%%%%%%%%%%%

\vspace{-5pt}
\subsection{Analysis}
\label{subsec:analysis}
\paragraph{Classification Accuracies}We present the weighted accuracy for \algoname, DANE and Cheung et al. in Table~\ref{tab:accuracy}. We show an improvement of upto~$25$\%. 

%DANE also computes the spectrum of graphs for node classification. One key difference between DANE and our system is that DANE supports temporal changes in edge connections between a fixed set of nodes, whereas our approach does not assume a fixed set of nodes. In order to account for this discrepancy, we only pass in adjacency matrices for a particular time-step with a fixed number of road-agents. 

% \paragraph{Confusion Matrix}
\paragraph{Interpretability} Our approach of classifying the driver behavior is intuitive and interpretable. In Figure~\ref{fig:resultontraf538}, GraphRQI classifies road-agent 1 (red bounding box) as threatening and road-agent 2 (yellow bounding box) as conservative. We plot the vector, $w = \li u$, where $u$ corresponds to the largest eigenvalue of $\li$, as a gradient below the figure. The larger entries of $w$ correspond to aggressive vehicles and smaller values correspond to conservative vehicles.

\begin{figure}[t]
    \centering
    % \resizebox{.85\textwidth}{!}{
    \includegraphics[width=\columnwidth]{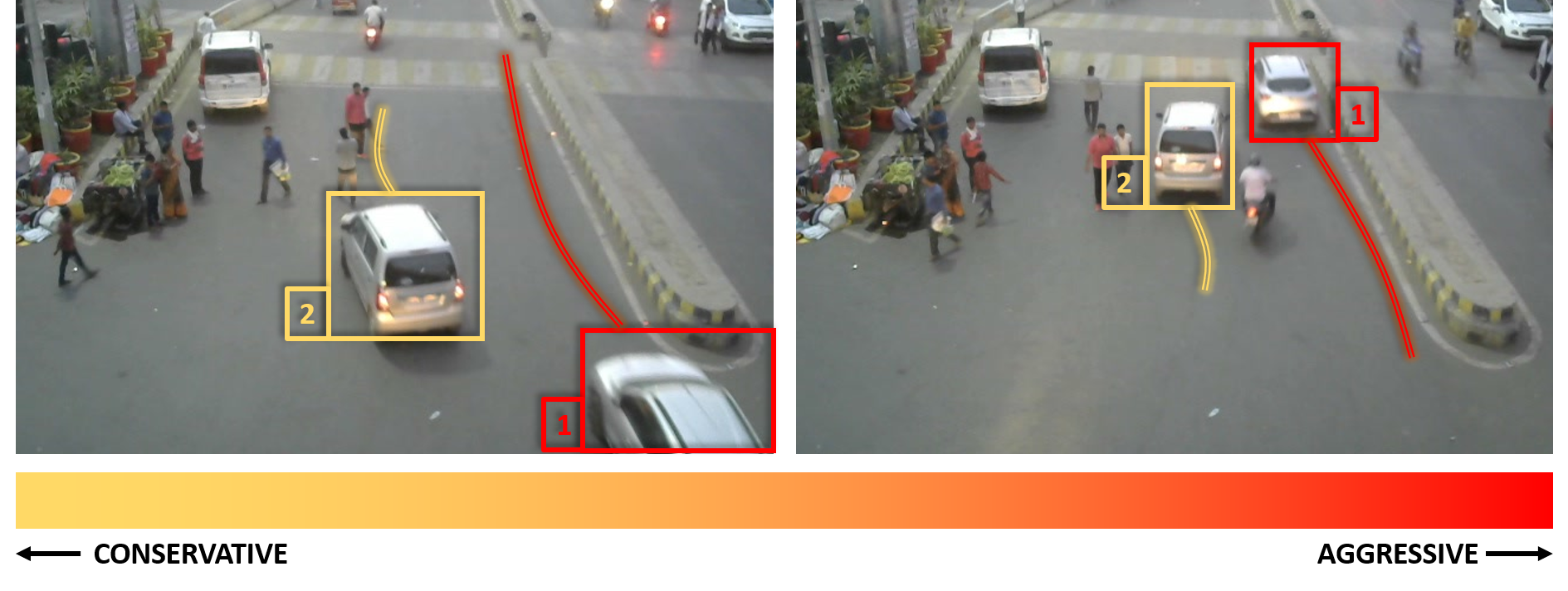}
    % }
    \caption{ \textbf{GraphRQI Performance: }GraphRQI classifies road-agent 1 (red bounding box) as threatening and road-agent 2 (yellow bounding box) as conservative. We plot the vector, $w = \li u$, where $u$ is the eigenvector corresponding to the largest eigenvalue of $\li$, as a gradient below the figure.} 
    % The larger entries of $w$ correspond to aggressive vehicles and smaller values correspond to conservative vehicles.}
    \label{fig:resultontraf538}
    \vspace{-15pt}
\end{figure}

%%%%%%%%%%%%%%%%%%%%%%%%%%%
%%%%%%%%%%%%%%%%%%%%%%%%%%%%%
\vspace{-5pt}
\subsection{Ablation Experiments}
\label{subsec:ablation}
\paragraph{Running Time Evaluation} For fair evaluation, we use the same programming platform (Python 3.7) and the same processor for running all the different methods (8 Core Intel Xeon(R) W2123 CPU clocked at 3.60GHz with 32 GB RAM). We empirically validate the theoretical guarantees of Theorem~\ref{thm: 1} by replacing our eigenvalue algorithm with several standard eigenvector algorithms, Singular Value Decomposition (SVD), Incremental SVD\cite{incsvd}, Restart SVD~\cite{zhang2018timers}, Truncated SVD, and the state-of-the-art iterative method Conjugate Gradient Descent~\cite{cg}. 
%The most prominent classical method is the singular value decomposition method which is popular to its stability. However, it is seldom used in practical applications due to its slow speed on large matrices. Several variations of SVD have been proposed to optimize speed~\cite{incsvd,zhang2018timers}. %Note that methods implemented in \textsc{python} had lower running times (measured in wall cloc) than when implemented in \textsc{matlab}. We, therefore, compare all methods in \textsc{python}.
For SVD, and TruncatedSVD, we directly used the library routine implemented in the \textsc{scipy} package, and for RestartSVD, we used the authors' original implementation. %Note that such routines are highly optimized for both commercial and academic applications. 
In practice, our method outperforms all these standard methods for Driver Behavior Classification. These results are shown in Table~\ref{tab:ablation}.%The running time of Conjugate Gradient Descent (CG) is $\bigO{d\sqrt{\kappa}}$, where $\kappa$ is the condition number of the matrix. \rohan{include analysis on cg} Note that all eigenvector algorithms converge to a set of eigenvectors, that encode the graph information of the traffic. Therefore, we note similar accuracies for all eigenvector algorithms and include the results in the supplementary material.

\vspace*{-3pt}
\paragraph{Accuracy Evaluation} GraphRQI algorithm uses MLP for the final driver behavior classification. We replace this classifier with logistic regression, support vector machines (SVM~\cite{svm})) and deep neural networks (LSTM~\cite{lstm}) and compare the results in Table~\ref{tab:ablation}. %Due to the size of the datasets used for evaluation, it is not a surprise that the LSTM network failed in comparison to shallow networks. In an attempt to understand why \rohan{Include feature analysis}

\subsection{Driver Behavior-Based  Road-Agent Trajectory Prediction}
\label{subsec:prediction}
We show that learning driver behavior can benefit road-agent navigation by improved trajectory prediction for each road-agent. Specifically, we demonstrate an application where knowledge of the behavior of a road-agent allows nearby road-agents to generate more efficient or safe paths. At any time-step $t$, given the coordinates of all road-agents and their corresponding behavior labels, we show that for the next $t + \Delta t$ seconds, aggressive (impatient, reckless, threatening) drivers may attempt to reach their goals faster through either overtaking, overspeeding, weaving, or tailgating. We also show that conservative (timid, cautious, careful) vehicles may steer away from aggressive road-agents. 
 Our application is currently limited to straight road and four-way intersection networks. When moving through the network, each vehicle’s speed is computed using a car-following model~\cite{car-following}. The vehicles are parametrized  using their positions, velocities, starting, and estimated short-term goal positions. Each vehicle is assigned a behavioral label through our behavior classification algorithm. Then as per the car-following model, the trajectories of the road agents are predicted. We include a demonstration of the application in the supplementary video.
\begin{figure}[t]
\begin{tabular}{l}
\includegraphics[width=.\columnwidth]{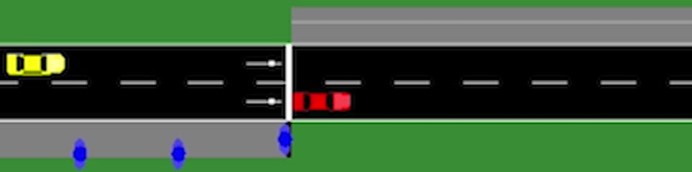}\\
\includegraphics[width=\columnwidth]{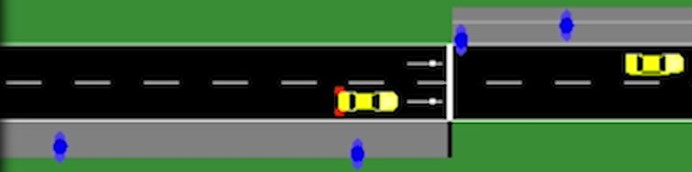}\\
% &
% \scalebox{0.8}{\includegraphics[width=0.4\textwidth]{AAAI/LaTeX/figures/face_gen.png}}
\end{tabular}
\caption{\textbf{Trajectory Prediction Application:} (\textit{top}) We see an aggressive vehicle (red) cutting across a pedestrian, not allowing it to cross first. (\textit{bottom}) Conservative vehicles, on the other hand, stop and wait for the pedestrian to cross first. }
\label{fig:sumo}
\vspace{-15pt}
\end{figure}
% \vspace{-5pt}
\section{Conclusion, Limitations, and Future Work}
% \vspace{-5pt}
We present a novel algorithm, \algoname, that can classify driver behaviors of road-agents from their trajectories. Our classification of behaviors is based on prior studies in traffic psychology, and our approach is designed for trajectory data from videos or other visual sensors. We compute the traffic matrices from the road-agent positions and use an eigenvalue algorithm to compute the spectrum of the Laplacian matrix. The spectrum comprises of the features used to train a multi-layer perceptron for driver behavior classification.  We also present a faster algorithm to compute the eigenvalue and demonstrate the performance of our algorithm on two open-source traffic datasets. We observe $25\%$ improvement in classification over prior methods.

Our approach has some limitations. The accuracy of our approach is governed by the accuracy of the tracking methods used to compute the positions of the road-agent. While the spectrum of the traffic graph can be used to classify many behaviors, it may not be sufficient to classify the behaviors in all traffic scenarios. As part of future work, we also need to evaluate our algorithms on other traffic datasets. One major issue is the lack of sufficient labeled datasets in terms of driver behaviors. 
%Our research is currently hindered by the lack of publicly available benchmark behavior-labeled autonomous driving datasets. We hope this work serves as motivation for future work in this area that spurs the development of further driver behavior datasets.
We would also like to combine the current approach with other deep learning models in order to improve accuracy and performance.
\section{Acknowledgements}

This work was supported in part by ARO Grants W911NF1910069 and W911NF1910315, Semiconductor Research Corporation (SRC), and Intel.

\bibliographystyle{IEEEtran}
\bibliography{refs}

\clearpage

\begin{appendices}
% \setstretch{1}
% \input{appendices/appendixA.tex}
% \input{appendices/appendixB.tex}
% \input{appendices/appendixC.tex}
% \input{appendices/appendixD.tex}
\end{appendices}

\appendix
\section{Proof of Theorem IV.1}
\begin{thm}
% Given a sequence, $\{ \sg_1,\sg_2, \ldots, \sg_T\} $, our eigenvalue algorithm for $\sg_t$ requires $\cost$ flops for each Laplacian at time-step $t$ to compute $k$ eigenvectors, where $k \ll \textnormal{dim}(\li_t)$
Given a sequence, $\{ \sg_1,\sg_2, \ldots, \sg_T\} $, our eigenvalue algorithm for $\sg_t$ converges cubically with a running time complexity of $\cost$ for each Laplacian at time-step $t$, where $k \ll \textnormal{dim}(\li_t)$, with a storage cost of $\size$.
% , with a storage cost of $\size$.
\label{thm: 1}
\end{thm}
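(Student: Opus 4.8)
The plan is to prove the three assertions of the statement --- cubic convergence, the per-timestep running time $\cost$, and the storage cost $\size$ --- largely independently, leaning on the classical analysis of Rayleigh Quotient Iteration for the rate and on the stated Lemma for the arithmetic cost of the inverse. For cubic convergence, I would first observe that each $\li_t = D_t - A_t$ is real symmetric, so the inner loop of the eigenvalue algorithm (Algorithm~\ref{algo}) is exactly a shift-and-invert power iteration with a Rayleigh-quotient shift, i.e.\ classical RQI. The only departure from textbook RQI is that the shifted inverse is not formed by a direct solve but applied through the nested Sherman--Morrison expression in the update rule. Since the Sherman--Morrison identity is \emph{exact}, the $x_\textrm{new}$ produced at each step coincides with the standard RQI iterate, so the known local cubic convergence for symmetric matrices transfers verbatim, giving $\Vts{x_\textrm{new}-x_\textrm{old}} = \bigO{\epsilon^3}$ near the target eigenvector. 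I would state the local-convergence hypothesis explicitly (a sufficiently accurate initial shift $\mu_j$) and note that cubic convergence makes the number of inner iterations effectively constant, on the order of $\log\log(1/\epsilon)$.

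For the running time, the dominant cost of each inner iteration is the application of the shifted inverse to $x_\textrm{old}$, and I would bound each piece of the update rule. The block structure $\li_t = \mathrm{blkdiag}(\li_{t-1},1) + \delta\delta^\top$ means that, given the inverse of the previous Laplacian, the inverse of $\li_t$ is a rank-one correction computed by the outer Sherman--Morrison step, while the inner step handles the $\sigma\sigma^\top$ update of $\li_{t-1}$ recursively. Invoking the Lemma, applying each such inverse costs $\costk$, since the sparsity of $\delta$ and $\sigma$ limits the fill created by each rank-one correction to the nonzero pattern of $\li^{\scriptscriptstyle -1}_t$. Multiplying by the constant iteration count from the convergence step leaves the per-eigenvector cost at $\costk$, and since the outer loop computes $k$ eigenvectors, the total is $\cost$.

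For the storage bound I would argue that the algorithm never materializes a dense $d\times d$ matrix: it maintains only the length-$d$ iterates $x_\textrm{old}$ and $x_\textrm{new}$, the sparse update vectors $\delta$ and $\sigma$, and the recursively represented action of the previous inverse, each of which is $\size$. Applying the shifted inverse via Sherman--Morrison requires only inner products and scalings of these vectors, so no auxiliary storage beyond $\size$ is incurred.

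I expect the main obstacle to be the second claim, specifically the careful justification of the Lemma inside the nested Sherman--Morrison recursion: I must verify that chaining the $\sigma\sigma^\top$ and $\delta\delta^\top$ corrections does not inflate the nonzero count beyond $\vts{\li^{\scriptscriptstyle -1}_t}$ across the $T$ time-steps, and that the $(1-\mu_j)$ shift factors are absorbed without forming a dense matrix. A secondary subtlety is making the ``constant number of iterations'' argument rigorous --- cubic convergence is only local, so strictly the bound holds once $\mu_j$ lies in the basin of attraction, which I would carry as a hypothesis rather than attempt to prove globally.
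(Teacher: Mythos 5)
Your proposal follows essentially the same route as the paper's own proof: classical RQI for the cubic rate, the block decomposition of $\li_t$ into $\mathrm{blkdiag}(\li_{t-1},1)$ plus a sparse outer-product correction, nested Sherman--Morrison applications whose cost is bounded by the Lemma, and linear storage from growing the Laplacian by one row and column per time-step. If anything, you are more careful than the paper, which silently assumes what you state as explicit hypotheses --- namely that the shift lies in the basin of attraction (the paper simply asserts ``cubic convergence follows from the classic RQI algorithm'') and that the inner-iteration count is effectively constant so the per-eigenvector cost bound survives multiplication by the number of iterations.
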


\begin{proof}
Our eigenvalue algorithm is motivated from the Rayleigh Quotient Iteration algorithm~\cite{rqi} (RQI). In RQI, to compute the eigenvector of any given matrix, $A$, corresponding to an approximation to a given eigenvalue, say $\eta$, the standard update formula for the RQI method is given as $x_\textrm{new} = (A - \eta I)^{-1}x_\textrm{old}$, where $x_\textrm{old}\in \bb{R}^{d \times 1}$ is initialized as a random vector. The efficiency of the update step depends on the efficiency of computing $(A - \eta I)^{-1}$. In GraphRQI, denoting $m = \vts{\mathcal{L}_t^{\scriptscriptstyle -1} }$ as the number of non-zero elements of $\mathcal{L}_t^{\scriptscriptstyle -1} $, and $d =  \textrm{dim}\li_t$, for each of the $k$ eigenvectors, the iterative update formula is: $x_\textrm{new} = (\mathcal{L}_t - \mu I)^{-1}x_\textrm{old}$. 
% Explicitly computing this inverse requires $\bigO{d^3}$ flops in the worst case, while the best case can be achieved in $\bigO{d}$. 
At first glance, it seems as if this update can be performed by applying the Sherman-Morrison (SM) update on the following decomposition:

\begin{equation}
    (\mc{L}_t-\mu I)^{-1} = \bss{\mc{L}_t - \mu \brr{I_1 + I_2+ \ldots + I_d} }^{-1}
\label{eq: bad decomp}
\end{equation} 

\noindent That is, by expressing $\mu I$ as a sum of $d$ rank-1 matrices, $I_j$, where each $I_j$ is a diagonal matrix with $I(j,j)=1$ and $0$ elsewhere. However, SM on $\mc{L}_t$ in the above equation requires $\bigO{d^3}$ flops ($d^2$ flops for each SM operation applied $d$ times. See Lemma~\ref{lemma3}). Therefore, we look towards a better solution. 

Observe that we can decompose $\mc{L}_t = \mc{L}_t^{'} + r r^\top$, where $\mathcal{L}_t^{'} = \left[
\begin{array}{c|c}
\mathcal{L}^r_{t-1} \Bstrut & 0 \Bstrut\\
\hline
0 \Tstrut & 1
\end{array}
\right]$ and $r r^\top$ is an outer product of rank 2 with $r \in \bb{R}^{d \times 2} $ is a vector of 1's and 0's, where $r(j)=1$ represents the addition of a new neighbor by the $j^\textrm{th}$ road-agent. $\mathcal{L}^r_{t-1}$ denotes the laplacian matrix at the previous time-step plus a diagonal update on account of observing $r$ new neighbors at the current time-step (This is further explained in lemma IV.2). So,

\[ (\mc{L}_t-\mu I)^{-1} = (\mc{L}_t^{'} + r r^\top -\mu I)^{-1},\]

\noindent where $r r^\top$ is a sum of two rank-1 matrices. 
% The benefit of such a decomposition is that we now have to apply SM twice (on $\mc{L}_t^{'} -\mu I$) instead of $d$ times on $\mc{L}_t$. 
% This reduces the cost of equation~\ref{eq: bad decomp} by a factor of $d$, assuming we ignore constants.
Therefore, if we can compute $(\mc{L}_t^{'} -\mu I)^{-1}$ in less than or equal to $\bigO{m}$, we are done. It is easy to verify that for any matrix $A$, \[\left[
\begin{array}{c|c}
A \Bstrut & 0 \Bstrut\\
\hline
0 \Tstrut & 1
\end{array}
\right]^{-1} = \left[
\begin{array}{c|c}
A^{-1} \Bstrut & 0 \Bstrut\\
\hline
0 \Tstrut & 1
\end{array}
\right].\] Therefore, \[\brr{\mathcal{L}_t^{'}}^{\scriptscriptstyle -1} = \left[
\begin{array}{c|c}
\mathcal{L}^r_{t-1} \Bstrut & 0 \Bstrut\\
\hline
0 \Tstrut & 1
\end{array}
\right]^{-1} =  \left[
\begin{array}{c|c}
\brr{\mathcal{L}^r_{t-1}}^{\scriptscriptstyle -1} \Bstrut & 0 \Bstrut\\
\hline
0 \Tstrut & 1
\end{array}
\right]\]
and thus,
% $(\mathcal{L}_t^{'} -\mu I)^{-1} = U_{t-1} (\Lambda - \mu I )^{\scriptscriptstyle -1} U_{t-1}^\top$.
\[(\mathcal{L}_t^{'} -\mu I)^{-1} = \left[
\begin{array}{c|c}
\brr{\mathcal{L}^r_{t-1} - \mu I}^{\scriptscriptstyle -1} \Bstrut & 0 \Bstrut\\
\hline
0 \Tstrut & (1-\mu)
\end{array}
\right].\]
% , where $*$ denotes element-wise multiplication. 
We can now use Lemma IV.2 to show that computing the inverse of the laplacian, $\brr{\mathcal{L}^r_{t-1} - \mu I}$ can be performed in $\bigO{m}$. The entire update can be summarized as follows, 

\begin{equation}
\resizebox{\columnwidth}{!}{
    $x_\textrm{new} \gets \textsc{SM}\brr{r r^\top,
\left[
\begin{array}{c|c}
\brr{1-\mu_j}\Big ( \textsc{SM}(\sigma\sigma^\top, \li_{t-1}) \Big )  \Bstrut & 0 \Bstrut \\
\hline
0 \Tstrut & 1-\mu_j
\end{array}
\right]
}x_\textrm{old}$
}
\label{eq: label}
\end{equation}

\noindent $\sigma\sigma^\top = \Sigma$.
% , again, assuming ignorance of constants.

At each time-step, we add a new column and a row to the laplacian matrix of the previous time-step. Therefore, the space complexity grows linearly with the dimension of the laplacian matrix. Finally, the cubic convergence follows from the classic RQI algorithm~\cite{rqi}.

\end{proof}
\section{Proof of Lemma IV.2}

\begin{lem}
% We can compute $\brr{\mathcal{L}^r_{t-1} - \mu I }^{\scriptscriptstyle -1}$ in $\bigO{m}$.
The complexity of computing the inverse of a Laplacian at time-step $t$ grows as $\bigO{m}$.
% , with a storage cost of $\size$.
\label{thm: 1}
\end{lem}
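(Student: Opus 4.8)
The plan is to establish the bound per time-step and inductively, assuming $\li^{-1}_{t-1}$ is already available from the previous iterate (as the recursive algorithm guarantees) and showing that assembling $\li^{-1}_t$ then costs $\bigO{m}$ with $m = \vts{\li^{-1}_t}$. First I would reuse the decomposition from the proof of Theorem~\ref{thm: 1}, namely $\li_t = \li_t' + \delta\delta^\top$ with $\li_t' = \left[\begin{smallmatrix} \li_{t-1} & 0 \\ 0 & 1 \end{smallmatrix}\right]$; when the incoming road-agent contributes several edges this is a constant-rank $rr^\top$ correction, which I would handle as a fixed number of successive rank-1 steps. (In Algorithm~\ref{algo} the relevant matrix is the shifted $\li_t - \mu I$, which is genuinely invertible; the diagonal shift leaves both the block structure and the cost argument below untouched, so I work with $\li_t$ for brevity.) The first key step is the block-diagonal inverse identity $\left[\begin{smallmatrix} A & 0 \\ 0 & 1\end{smallmatrix}\right]^{-1} = \left[\begin{smallmatrix} A^{-1} & 0 \\ 0 & 1\end{smallmatrix}\right]$, which lets me obtain $(\li_t')^{-1}$ from $\li^{-1}_{t-1}$ by mere padding, at $\bigO{1}$ cost and with no additional fill-in.

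Next I would apply the Sherman--Morrison formula to the rank-1 correction. Using the symmetry of the Laplacian inverse this reads
\[
\li^{-1}_t = (\li_t')^{-1} - \frac{w\,w^\top}{1 + \delta^\top w}, \qquad w = (\li_t')^{-1}\delta,
\]
and I would split the cost into forming $w$, evaluating the scalar denominator, and assembling the rank-1 correction. Here I exploit that $\delta$ is binary with $\vts{\delta} = s \ll d$ nonzeros, so $w$ is exactly the sum of the $s$ columns of $(\li_t')^{-1}$ indexed by the support of $\delta$, and the denominator $1 + \delta^\top w$ costs only $\bigO{s}$ because it reads $w$ at those same $s$ coordinates.

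The crux is the accounting for the rank-1 term. I would argue that every nonzero of $w\,w^\top$ lies inside the support of the output $\li^{-1}_t$ --- these are precisely the fill-in entries created by the new edges --- so the number of scalar multiply--adds needed to form $\li^{-1}_t$ from $(\li_t')^{-1}$ is at most $m$, each costing $\bigO{1}$. The same containment bounds the nonzeros of the columns summed into $w$, so computing $w$ is also absorbed into $\bigO{m}$. Combining the free padding, the $\bigO{s}$ scalar, and the $\bigO{m}$ assembly over a constant number of rank-1 steps yields the claimed $\bigO{m}$ total.

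The main obstacle is making the support-containment argument rigorous rather than merely generic: a nonzero of $w\,w^\top$ could in principle cancel exactly against a nonzero of $(\li_t')^{-1}$, so that $\vts{\li^{-1}_t}$ drops below the work actually performed and the $\bigO{m}$ bound is lost. I would close this gap by invoking the non-degeneracy of the Laplacian inverse under the graph construction of Section~\ref{subsec: computation of graphs}, where the edges added at step $t$ produce genuine, non-cancelling fill-in. A secondary point I would verify is that the bound composes across the sequence $\{\li_1,\dots,\li_T\}$: since padding creates no fill-in and each constant-rank update only enlarges the support, $\vts{(\li_t')^{-1}} = \bigO{m}$, guaranteeing that the cost of forming $w$ never dominates the assembly.
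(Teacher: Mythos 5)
Your proposal is correct and runs on the same engine as the paper's proof: express the step-$t$ change as a constant number of rank-one corrections to a matrix whose inverse is already available, apply Sherman--Morrison once per correction, and charge each application $\bigO{m}$ flops via sparsity (the paper isolates exactly this flop accounting as its Lemma 3: two matrix--vector products, one inner product, one outer product, one subtraction, each $\bigO{m}$). The routes differ in scope and bookkeeping, though. The paper's lemma is not about the edge update $\delta\delta^\top$ at all --- that part, together with the block-padding identity you also use, lives inside the proof of Theorem IV.1 --- but about the diagonal degree correction $\mathcal{L}^r_{t-1} = \mathcal{L}_{t-1} + I_1 + \cdots + I_r$, where each $I_j$ is a rank-one diagonal indicator and $r$ is treated as constant because each vehicle is assumed to observe at most one new neighbor per time-step; you fold the padding, the degree correction, and the off-diagonal edge correction into a single constant-rank update, which is arguably cleaner but effectively re-proves the theorem's inner step rather than the lemma as the paper scopes it. More substantively, the paper does not assume the previous inverse is stored: it reconstructs $(\mathcal{L}_{t-1}-\mu I)^{-1}$ in factored form as $U_{t-1}(\Lambda-\mu I)^{-1}U_{t-1}^\top$ from the spectrum the algorithm maintains anyway, whereas your induction keeps $\li^{-1}_{t-1}$ explicitly, which costs $\bigO{m}$ memory and sits in tension with the $\bigO{d}$ storage claimed in Theorem IV.1. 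Finally, your cancellation caveat --- that exact fill-in cancellation could make $\vts{\li^{-1}_t}$ smaller than the work actually performed --- is a genuine subtlety the paper never addresses; its Lemma 3 silently assumes every intermediate quantity has support bounded by that of $\mathcal{L}^{-1}_t$, so on this point your argument is the more careful of the two.
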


\begin{proof}

Note that $\mathcal{L}^r_{t-1} \neq \mathcal{L}_{t-1} \implies \brr{\mathcal{L}^r_{t-1}}^{\scriptscriptstyle -1} \neq \mathcal{L}_{t-1}^{\scriptscriptstyle -1}$ since the diagonal elements (the degree of each node (vehicle)) are now different due the entry of new nodes at current time $t$. Let the number of new nodes be $r$. For example, if at the current time-step, 10 cars from the previous time-step observed a new neighbor, then $r=10$. Since we heuristically set the time-step and the KNN parameters, we constrain each vehicle observing at most 1 new neighbor for each time-step. Therefore, 

\[ \mathcal{L}^r_{t-1} = \mathcal{L}_{t-1} + \Sigma, \]

\noindent where $\Sigma =  I_1 + I_2 + \ldots + I_r$ is a low-rank matrix with rank $r \ll d$, where we treat $r$ as a constant due to our earlier assumption. Each $I_j$ is a diagonal matrix with $I(j,j)=1$ and $0$ elsewhere. Hence, 

\begin{equation}
\begin{split}
    \brr{\mathcal{L}^r_{t-1} - \mu I} &= \mathcal{L}_{t-1}  - \mu I + \Sigma \\
    \brr{\mathcal{L}^r_{t-1} - \mu I }^{\scriptscriptstyle -1} &= \brr{\mathcal{L}_{t-1} - \mu I + I_1 + I_2+ \ldots + I_r}^{-1},\\ 
\end{split}
\end{equation}

\noindent where $(\mathcal{L}_{t-1} -\mu I)^{-1} = U_{t-1} (\Lambda - \mu I )^{\scriptscriptstyle -1} U_{t-1}^\top$. Using the Sherman-Morrison formula $r$ times on $(\mathcal{L}_{t-1}- \mu I)$, we get $\brr{\mathcal{L}^r_{t-1}-\mu I}^{\scriptscriptstyle -1}$ in $\bigO{m}$ (See appendix~\ref{lemma3}).

\end{proof}

\section{Lemma 3: Each Sherman-Morrison operation requires $\bigO{m}$ flops.}
\label{lemma3}
\begin{proof}

For a matrix of size $d \times d$, each SM operation requires,

\begin{itemize}
    \item 2 matrix-vector multiplications - $\bigO{m}$
    \item 1 inner product - $\bigO{m}$
    \item 1 outer-product computation - $\bigO{m}$
    \item 1 matrix subtraction - $\bigO{m}$
\end{itemize}

\end{proof}
\section{Annotation of TRAF and Argoverse Datasets}

We study and classify road-agents into six different driving behaviors. We used two datasets for training a neural network and the class-wise distribution of the final labels are shown in Figure~\ref{fig:annot}.

\begin{table}[h]
    \centering
\resizebox{0.8\columnwidth}{!}{
\begin{tabular}{lc} 
\toprule
\textbf{Label} \Tstrut & \textbf{Traits} \Bstrut  \\
\hline
\multirow{4}{*}{Impatient}\Tstrut         & Overtaking\\
                & Over-speeding\\
 & Tailgating\\
                & Weaving\\
                \midrule
\multirow{2}{*}{Threatening}      & Cutting in front of agents \\
                 & Driving extremely close to agents \\
                 \midrule
% & Linear Regression                &&&&&&&& \\
% \cline{2-8}
 \multirow{3}{*}{Reckless}                &  Driving in the wrong direction\\
                         & Jaywalking through traffic\\
                         & Not following rules \\
\midrule
% \toprule
%  \Tstrut &  \Bstrut & Overall \\
% \hline
\multirow{3}{*}{Careful}                & Maintain neighbors\\
                                        &Uniform Speed\\
                                        &Follow rules\\
\midrule

 \multirow{2}{*}{Timid}     & Not moving\\
            &Slowing down/coming to stop\\
% & Linear Regression                &&&&&&&& \\
\midrule
% \cline{2-8}
 Cautious              & Waiting for vehicles/pedestrians \\
\bottomrule
\end{tabular}
}
\caption{Driver Behavior Traits.}
\label{tab:annot}
\end{table}

% \vspace{15pt}

Studies in traffic psychology show that each of the six driving behaviors are associated with certain driving traits. For example, impatient drivers show different traits (e.g., over-speeding, overtaking) than reckless drivers (e.g., driving in the opposite direction to the traffic)~\cite{rohanref6-reckless}. Similarly, timid drivers may drive slower than the speed limit~\cite{rohanref7-timid}, and careful drivers obey the traffic rules. We now describe the labeling process. First we distinguish between annotators and expert annotators. Annotators correspond to the individuals that labeled the videos via crowd-sourcing. They may or may not have expertise or familiarity with the traffic in the videos. All annotators were asked to observe these traits in the videos and list the most prominent traits observed in each road-agent. The trait table is shown in the Table~\ref{tab:annot}.

The annotators were not shown the class-wise distribution of traits, nor were they shown annotations marked by any other annotator. Each annotator was presented with a uniform presentation of traits and their task was to simply mark the traits that they observed for each vehicle. Each annotator labeled every video. The final class was assigned by resolving discrepancies for each road-agent. Some vehicles would receive conflicting traits. If the conflicting traits belonged to the same Parent class (aggressive or conservative), then we would simply select the majority trait. In rare scenarios where there is no majority, we would simply take a final call. However, if the conflicting traits belonged to different parent classes, then that road-agent would be re-annotated through crowd-sourcing, and would repeat the steps.
    \begin{figure}[H]
    \centering
    \includegraphics[width=\columnwidth]{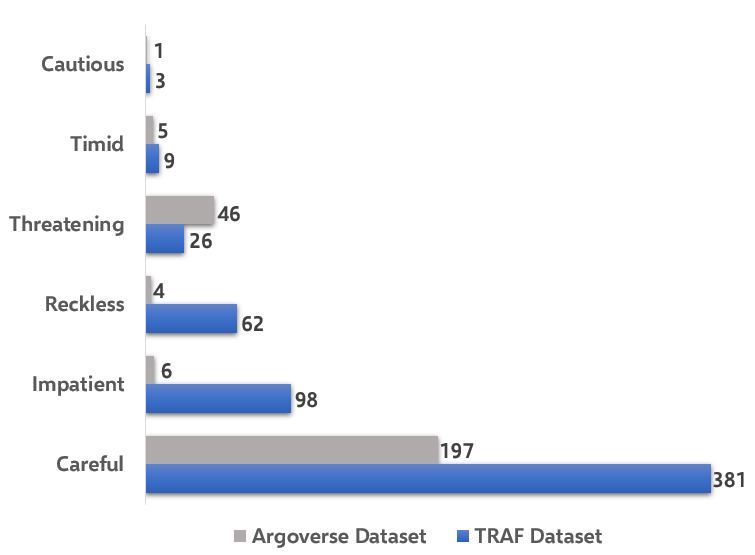}
    \caption{Class-wise distribution of the final labels}
    \label{fig:annot}
\end{figure}

\end{document}